\theoremstyle{plain}
\newtheorem{theorem}{Theorem}[section]
\newtheorem{proposition}[theorem]{Proposition}
\newtheorem{lemma}[theorem]{Lemma}
\theoremstyle{definition}
\newtheorem{definition}[theorem]{Definition}
\newtheorem{assumption}[theorem]{Assumption}
\theoremstyle{remark}
\algrenewcommand\algorithmicindent{1em}
\title{End-to-End Conformal Calibration for Optimization Under Uncertainty}
\renewcommand{\thefootnote}{\fnsymbol{footnote}}
\author{%
\name Christopher Yeh\footnotemark[1] \email cyeh@caltech.edu \\
\name Nicolas Christianson\footnotemark[1] \email nchristi@caltech.edu \\
\name Alan Wu \email ywu9@caltech.edu \\
\name Adam Wierman \email adamw@caltech.edu \\
\name Yisong Yue \email yyue@caltech.edu \\
\addr Department of Computing and Mathematical Sciences \\
California Institute of Technology
}
\newcommand{\R}{\mathbb{R}}  
\newcommand{\Sym}{\mathbb{S}}  
\newcommand{\ceil}[1]{\left\lceil #1 \right\rceil}  
\newcommand{\diff}{\mathop{}\!\mathrm{d}}  
\newcommand{\deriv}[3][]{\frac{\mathrm{d}^{#1}#2}{\mathrm{d} #3^{#1}}}  
\newcommand{\norm}[1]{\left\lVert#1\right\rVert}  
\newcommand{\one}{\mathbf{1}}  
\newcommand{\pderiv}[3][]{\frac{\partial^{#1}#2}{\partial #3^{#1}}}  
\newcommand{\zero}{\mathbf{0}}  
\DeclareMathOperator*{\argmin}{arg\,min}
\DeclareMathOperator{\diag}{diag}  
\DeclareMathOperator{\E}{\mathbb{E}}  
\DeclareMathOperator{\softplus}{softplus}  
\DeclarePairedDelimiterX{\infdivx}[2]{(}{)}{%
  #1\;\delimsize\|\;#2%
}
\DeclareMathOperator{\Gaussian}{\mathcal{N}}  
\newcommand{\Pcal}{\mathcal{P}}
\DeclareMathOperator{\ReLU}{ReLU}
\DeclareMathOperator{\pinball}{pinball}
\DeclareMathOperator{\NLL}{NLL}
\newcommand{\cin}{c^\mathrm{in}}
\newcommand{\cout}{c^\mathrm{out}}
\newcommand{\zin}{z^\mathrm{in}}
\newcommand{\zout}{z^\mathrm{out}}
\newcommand{\zstate}{z^\mathrm{state}}
\newcommand{\Dc}{D_\text{cal}}
\newcommand{\ETO}{{\textbf{\texttt{ETO}}}}
\newcommand{\ETOsll}{\textbf{\texttt{ETO-SLL}}}
\newcommand{\ETOjc}{\textbf{\texttt{ETO-JC}}}
\begin{document}

\maketitle

\footnotetext[1]{These authors contributed equally.}
\renewcommand{\thefootnote}{\arabic{footnote}}

\begin{abstract}
Machine learning can significantly improve performance for decision-making under uncertainty across a wide range of domains. However, ensuring robustness guarantees requires well-calibrated uncertainty estimates, which can be difficult to achieve with neural networks. Moreover, in high-dimensional settings, there may be many valid uncertainty estimates, each with its own performance profile---i.e., not all uncertainty is equally valuable for downstream decision-making. To address this problem, this paper develops an end-to-end framework to \textit{learn} uncertainty sets for conditional robust optimization in a way that is informed by the downstream decision-making loss, with robustness and calibration guarantees provided by conformal prediction. In addition, we propose to represent general convex uncertainty sets with partially input-convex neural networks, which are learned as part of our framework. Our approach consistently improves upon two-stage estimate-then-optimize baselines on concrete applications in energy storage arbitrage and portfolio optimization.
\end{abstract}
\section{Introduction}

Well-calibrated estimates of forecast uncertainty are vital for risk-aware decision-making in many real-world systems. For instance, grid-scale battery operators forecast electricity prices to schedule battery charging/discharging to maximize profit, while accounting for forecast uncertainty to mitigate financial or operational risk. Similarly, financial investors use forecasts of asset returns with uncertainty estimates to maximize portfolio returns while minimizing downside risk.

Traditional approaches to decision-making under uncertainty often follow an “estimate-then-optimize” (ETO) paradigm \citep{chenreddy_data-driven_2022}, in which the uncertainty estimation and decision-making stages are decoupled. In the ``estimation'' stage, a predictive model is trained to forecast a target quantity along with an uncertainty set. In the ``optimization'' stage, this uncertainty estimate informs a downstream decision. Crucially, the cost or performance of the downstream decision is typically not fed back into the training of the predictive model.

A recent line of work \citep{chenreddy_data-driven_2022,patel_conformal_2024,wang_learning_2024,chenreddy_end--end_2024} has made steps toward bridging the gap between uncertainty quantification and robust optimization-driven decision-making, where optimization problems take a forecast uncertainty set as a parameter, as is common in energy systems \citep{yan_robust_2020,parvar_optimal_2022} and financial applications \citep{gregory_robust_2011,quaranta_robust_2008}.
However, existing approaches are suboptimal for several reasons:

\begin{enumerate}[nosep,left=1em]
\item \textbf{Lack of decision-aware training:} The predictive model is typically not trained with feedback from the downstream objective. Because the downstream objective is often asymmetric with respect to the forecasting model's errors, prediction models trained with decision-agnostic losses may achieve high predictive accuracy but still perform poorly on the decision-making objective.

\item \textbf{Restricted uncertainty set parameterizations:} To preserve tractability of the robust optimization problem, uncertainty sets are often constrained to simple parametric forms (e.g., boxes or ellipsoids), limiting the expressivity of uncertainty estimates.

\item \textbf{Calibration challenges:} Because neural network models are often poor at estimating their own uncertainty, the forecasts may not be well-calibrated. Recent approaches such as isotonic regression \citep{kuleshov_accurate_2018} and conformal prediction \citep{shafer_tutorial_2008} have made progress in providing \emph{calibrated} uncertainty estimates from deep learning models, but such methods are typically applied post-hoc to trained models and are therefore difficult to incorporate into an end-to-end training procedure.
\end{enumerate}

\begin{figure}[t]
\centering
\includegraphics[width=\textwidth]{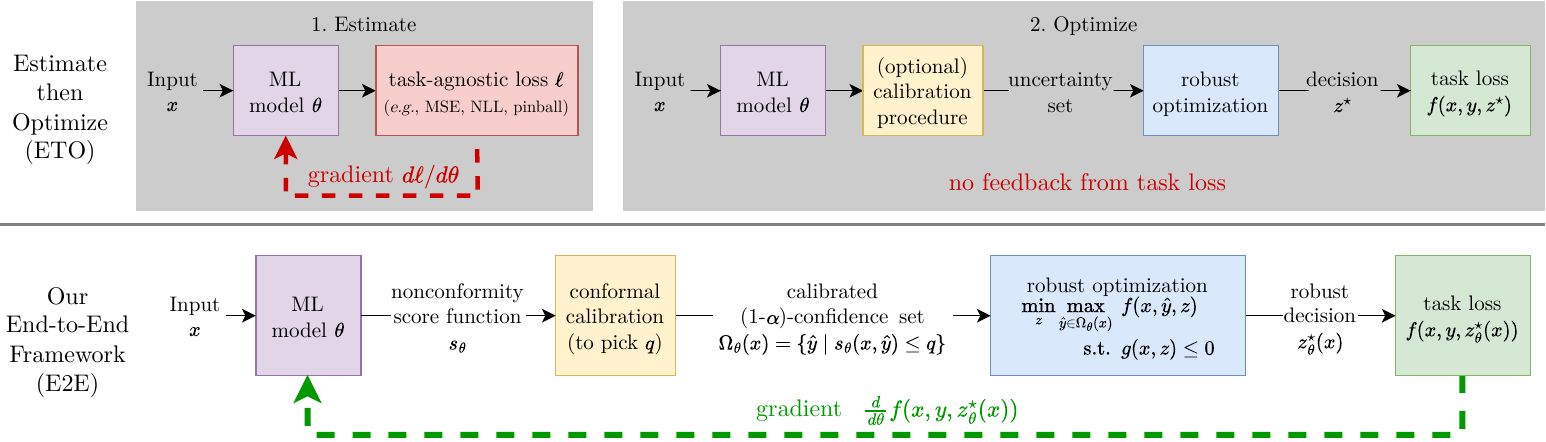}%
\caption{Whereas prior ``estimate-then-optimize'' (ETO, top) approaches separate the model training from the optimization (decision-making) procedure, we propose a framework for end-to-end (E2E, bottom) conformal calibration for optimization under uncertainty that directly trains the machine learning model using gradients from the task loss.}%
\label{fig:diagram}
\end{figure}

As such, there is as of yet \emph{no} comprehensive methodology for training calibrated uncertainty-aware deep learning models end-to-end with downstream decision-making objectives. In this work, we provide the first such methodology. We make three specific contributions corresponding to the three issues identified above:

\begin{enumerate}[nosep,left=1em]
\item \textbf{We develop a framework for training prediction models end-to-end with downstream decision-making objectives and conformal-calibrated uncertainty sets in the context of the \emph{conditional robust optimization} problem.} This framework is illustrated in \Cref{fig:diagram} (bottom). By including differentiable conformal calibration in our model during training, we close the loop and ensure that feedback from the uncertainty's impact on the downstream objective is accounted for in the training process, since not all model errors nor uncertainty estimates will result in the same downstream cost. This end-to-end training enables the model to focus its learning capacity on minimizing error and uncertainty on outputs with the largest decision-making cost, with more leeway for outputs that have lower costs.
\item \textbf{We propose using partially input-convex neural networks (PICNNs) as the nonconformity score function for conformal prediction, enabling the approximate parametrization of arbitrary compact, convex uncertainty sets in the conditional robust optimization problem.} To the best of our knowledge, no existing works use PICNNs to parametrize such arbitrary convex uncertainty sets. Due to the universal convex function approximation property these networks enjoy \citep{chen_optimal_2019}, this approach enables training much more general representations of uncertainty than prior works have considered, which in turn yields substantial improvements on downstream decision-making performance. Importantly, PICNNs are well-matched to our conditional robust optimization problem: we show that the robust problem resulting from this parametrization can be reformulated as a tractable convex optimization problem.
\item \textbf{We propose an exact and computationally efficient method to differentiate through the conformal prediction procedure during training.} Unlike prior work \citep{stutz_learning_2022}, our method gives exact gradients, without using approximate ranking and sorting methods.
\end{enumerate}

Finally, we evaluate the performance of our approach on two applications: an energy storage arbitrage task and a portfolio optimization problem. We demonstrate that the combination of end-to-end training with the flexibility of the PICNN-based uncertainty sets consistently improves over ETO baseline methods. The performance benefit of our end-to-end method is apparent even under distribution shift.

\section{Problem Statement and Background}
\label{sec:background}

Our problem is defined formally as follows: suppose that data $(x,y) \in \R^m \times \R^n$ is sampled i.i.d. from an unknown joint distribution $\Pcal$. Upon observing the input $x$ (but not the label $y$), an agent makes a decision $z \in \R^p$. After the decision is made, the true label $y$ is revealed, and the agent incurs a \textit{task loss} $f(x,y,z)$, for some known task loss function $f: \R^m \times \R^n \times \R^p \to \R$. In addition, the agent's decision must satisfy a set of joint constraints $g(x, z) \leq 0$ coupling $x$ and $z$.

As an illustrative example, consider an agent who would like to minimize the costs of charging and discharging a battery over 24 hours in a day. The agent may use weather forecasts and historical observations $x$ to predict future energy prices $y$. Based on the predicted prices, the agent decides on the amount $z$ to charge or discharge the battery. The task loss $f$ is the cost incurred by the agent, and the constraints $g$ include limits on how fast the battery can charge as well as the maximum capacity of the battery. This example is explored in more detail in \Cref{sec:experiments}.

Because the agent does not observe the label $y$ prior to making its decision, ensuring good performance and constraint satisfaction requires that the agent makes decisions $z$ that are \emph{robust} to the various outcomes of $y$. A common objective is to choose $z$ to robustly minimize the task loss and satisfy the constraints over all realizations of $y$ within a $(1-\alpha)$-confidence region $\Omega(x) \subset \R^n$ of the true conditional distribution $\Pcal(y \mid x)$, where $\alpha \in (0,1)$ is a fixed risk level based on operational requirements. In this case, the agent's robust decision can be expressed as the optimal solution to the following \textbf{conditional robust optimization (CRO)} problem \citep{chenreddy_data-driven_2022}:
\begin{equation}\label{eq:z-star}\hspace{-0.6em}
    z^\star(x)
    := \argmin_{z \in \R^p} \max_{\hat{y} \in \Omega(x)} f(x,\hat{y},z)
    \ \text{s.t. } g(x,z) \leq 0.
\end{equation}
After the agent decides $z^\star(x)$, the true label $y$ is revealed, and the agent incurs the task loss $f(x, y, z^\star(x))$. Thus, the agent seeks to minimize expected task loss
\begin{equation} \label{expr:task_loss}
    \E_{(x,y) \sim \Pcal}\left[f(x,y,z^\star(x))\right].
\end{equation}
While the joint distribution $\Pcal$ is unknown, we assume that we have a dataset $D = \{(x_i, y_i)\}_{i=1}^N$ of i.i.d. samples from $\Pcal$. Then, our objective is to train a machine learning model to learn an approximate $(1-\alpha)$-confidence set $\Omega(x)$ of possible $y$ values for each input $x$. Formally, our learned $\Omega(x)$ should satisfy the following \emph{marginal coverage} guarantee.

\begin{definition}[marginal coverage] \label{defn:coverage}
    An uncertainty set $\Omega(x)$ for the distribution $\Pcal$ provides \emph{marginal coverage at level $(1-\alpha)$} if
    $\mathbb{P}_{(x, y) \sim \Pcal}\left(y \in \Omega(x)\right) \geq 1-\alpha.$
\end{definition}

\paragraph{Comparison to related work.}
The problem of constructing data-driven and machine-learned uncertainty sets with probabilistic coverage guarantees for use in robust optimization has been widely explored in prior literature (e.g., \citet{bertsimas_robust_2006,bertsimas_data-driven_2018,alexeenko_nonparametric_2020,goerigk_data-driven_2023}). \citet{chenreddy_data-driven_2022} first coined the phrase ``conditional robust optimization'' for the problem $\eqref{eq:z-star}$ and considered learning context-dependent uncertainty sets $\Omega(x)$ in this setting. However, their approach results in a mixed integer optimization that is intractable to solve for large-scale problems. Moreover, they follow the ``estimate then optimize'' (ETO) paradigm \citep{elmachtoub_estimate-then-optimize_2023}. As shown in \Cref{fig:diagram} (top), the ETO paradigm separates the machine learning model training from the decision optimization. The lack of feedback from the downstream task loss during model training in ETO generally leads to uncertainty sets $\Omega(x)$ which yield suboptimal results. Several other recent papers follow the ETO paradigm using homoskedastic ellipsoidal uncertainty sets \citep{johnstone_conformal_2021}, heteroskedastic box and ellipsoidal uncertainty sets \citep{sun_predict-then-calibrate_2023}, and a ``union of balls'' parametrization of uncertainty \citep{patel_conformal_2024}. In our experiments (\Cref{sec:experiments}), we demonstrate consistent improvements over the methods of \citet{johnstone_conformal_2021} and \citet{sun_predict-then-calibrate_2023}.

Closest to our work is an ``end-to-end'' formulation of the CRO problem posed by \citet{chenreddy_end--end_2024}, which aims to learn conditional uncertainty sets $\Omega(x)$ using a weighted combination of the downstream task loss along with a ``conditional coverage loss'' to promote calibrated uncertainty. However, they focus solely on ellipsoidal uncertainty sets, and their conditional coverage loss does not provably ensure coverage for their learned uncertainty sets. In our experiments, we do not compare against \citet{chenreddy_end--end_2024} because we only consider other methods with a provable coverage guarantee.\footnote{As of the time of writing, the approach of \citet{chenreddy_end--end_2024} also suffers from substantial inconsistencies between \href{https://github.com/Achenred/End-to-end-CRO/}{their code implementation} and the equations from their paper. In particular, the conditional coverage loss proposed in their paper is not implementable, as it will (almost surely) yield zero gradients.}

A concurrent related work by \citet{wang_learning_2024} approaches the problem of learning \emph{unconditional} uncertainty sets for robust optimization, while achieving robust constraint satisfaction guarantees. While their method also uses an end-to-end task loss, we find their use of the same uncertainty set $\Omega$ for every problem instance (i.e., $\Omega$ is independent of $x$) to be highly restrictive. For example, in the context of our battery control problem, this restriction would disallow the use of weather forecasts and historical price data to estimate uncertainty in future energy prices.\footnote{After the present paper's submission, the authors of \citet{wang_learning_2024} updated their preprint and expanded their formulation to accommodate conditional uncertainty sets.} They also use restrictive uncertainty set parametrizations such as box, ellipsoidal, and polyhedral uncertainty. Moreover, their guarantee on constraint satisfaction requires an assumption on the convergence of their algorithm to an approximate stationary point; however, this convergence is only guaranteed asymptotically, and not in finitely many samples.

An alternative line of research designs post-hoc conformal prediction sets aimed at minimizing decision costs in robust classification problems \citep{kiyani_decision_2025,cortes-gomez_utility-directed_2024}. However, to date, it is unclear how to adapt these methods, which produce discrete uncertainty sets, for regression settings.

In contrast, our work overcomes these limitations: we incorporate differentiable conformal calibration \textit{during training} to ensure that uncertainty is learned end-to-end in a manner that is both calibrated and minimizes task loss. We apply split conformal post-hoc calibration during inference for provable guarantees on coverage. Furthermore, we use partially input-convex neural networks \citep{amos_input_2017} to directly parameterize the nonconformity score function in conformal prediction, enabling a general and expressive representation of arbitrary conditional convex uncertainty regions that can vary with $x$ and be used efficiently in robust optimization. 

Beyond the above closely related work, this paper builds upon and contributes to several different areas in machine learning and robust optimization; see \Cref{sec:appendix_related_work} for a comprehensive discussion.

\section{End-to-End Training of Conformally Calibrated Uncertainty Sets}
\label{sec:method}

In this section, we describe our proposed methodological framework for end-to-end task-aware training of predictive models with conformally calibrated uncertainty for the conditional robust optimization problem \eqref{eq:z-star}. Our overarching goal is to learn uncertainty sets $\Omega(x)$ which provide $(1-\alpha)$ coverage for any choice of $\alpha \in (0, 1)$, and which offer the lowest possible task loss \eqref{expr:task_loss}. To this end, we must consider three primary questions:
\begin{enumerate}[nosep,left=1em]
    \item How should the family of uncertainty sets $\Omega(x)$ be parametrized?
    \item How can we guarantee that the uncertainty set $\Omega(x)$ provides coverage at level $1-\alpha$?
    \item How can the uncertainty set $\Omega(x)$ be learned to minimize expected task loss?
\end{enumerate}

\Cref{fig:diagram} (bottom) illustrates the key parts of our framework to answer these questions. First, we use a machine learning model to parametrize a nonconformity score function $s_\theta$, and we define the uncertainty set $\Omega(x)$ to be a $q$-sublevel set of $s_\theta(x,\cdot)$. Second, we use conformal calibration to pick $q$ to enforce marginal coverage. Third, we backpropagate gradients through both the robust optimization and conformal calibration steps to update the machine learning model, thereby enabling end-to-end learning. \Cref{sec:uncertainty_rep,sec:calibration,sec:conformal-training} describe each of these parts in detail, and \Cref{alg:dauq} shows pseudocode for both training and inference.

For the rest of the paper, we make the following assumptions on the functions $f$ and $g$ to ensure tractability of the resulting optimization problem.

\begin{assumption}
\label{as:main}
We assume the task loss has the form $f(x,y,z) = y^\top F(x,z) + \tilde{f}(x,z)$, where $F(x,z)$ is an affine function of $z$ and $\tilde{f}(x,z)$ is convex in $z$. Furthermore, we assume that $g(x,z)$ is convex in $z$.
\end{assumption}

Technically, the assumption on $f$ can be relaxed to the more general setting where $f$ is a \emph{conic-representable saddle function} that is convex in $z$ and concave in $y$ \citep{juditsky_well-structured_2022}, which admits an automated dual representation \citep{schiele_disciplined_2023}. However, we believe the core ideas for our exposition and proofs are much more straightforward with the stronger assumption in \Cref{as:main}.

\subsection{Representations of the uncertainty set \label{sec:uncertainty_rep}}

We consider convex uncertainty sets of the form
\begin{equation} \label{eq:score_rep_uncertainty}
    \Omega_\theta(x) = \left\{\hat{y} \in \R^n \mid s_\theta(x, \hat{y}) \leq q \right\},
\end{equation}
where $s_\theta: \R^m \times \R^n \to \R$ is an arbitrary \emph{nonconformity score function} that is convex in $\hat{y}$, $q$ is a scalar, and $\theta$ collects the parameters of a model that we will seek to learn. Note that this representation loses no generality; \emph{any} family of convex sets $\Omega(x)$ can be represented as such a collection of sublevel sets of a partially input-convex function $s(x, \hat{y})$. This particular representation is chosen due to the ease of calibrating sets of this form via conformal prediction to ensure marginal coverage, as we will describe in \Cref{sec:calibration}.

In choosing a particular score function $s_\theta$, one must balance two considerations: first, the \emph{generality} of the sets $\Omega_\theta(x)$ that $s_\theta$ can represent, and second, the \emph{tractability} of the resulting robust optimization problem \eqref{eq:z-star}. We will now show that our representation \eqref{eq:score_rep_uncertainty} generalizes commonly-used box and ellipsoidal uncertainty sets, which are known to have tractable robust problems; later, in \Cref{sec:picnn}, we will propose to approximate more general convex uncertainty sets using partially input-convex neural networks.

\paragraph{Box uncertainty sets.}
A simple uncertainty representation is box uncertainty where $\Omega(x) = [\underline{y}(x), \overline{y}(x)]$ is an $n$-dimensional box whose lower and upper bounds depend on $x$. Let $h_\theta: \R^m \to \R^n \times \R^n$ be a neural network that estimates lower and upper bounds: $h_\theta(x) = (h_\theta^\text{lo}(x), h_\theta^\text{hi}(x))$. To represent a box uncertainty set in the form \eqref{eq:score_rep_uncertainty}, we define a nonconformity score function that generalizes scalar conformalized quantile regression \citep{romano_conformalized_2019}:
\[
    s_\theta(x,y) = \max(\norm{h_\theta^\text{lo}(x) - y}_\infty, \norm{y - h_\theta^\text{hi}(x)}_\infty).
\]
Then, the uncertainty set \eqref{eq:score_rep_uncertainty} becomes
\[
    \Omega_\theta(x)
    = \left[h_\theta^\text{lo}(x) - q\one,\ h_\theta^\text{hi}(x) + q\one
    \right]
    =: \Big[\underline{y}(x),\ \overline{y}(x)\Big].
\]
Given a box uncertainty set $\Omega_\theta(x)$, we can take the dual of the inner maximization problem (see \Cref{sec:appendix_box}) to transform the robust optimization problem \eqref{eq:z-star} into an equivalent form that is convex, and hence tractable, under \Cref{as:main}:
\begin{equation}\label{eq:z-star-box-nonrobust}
\begin{aligned}
    z_\theta^\star(x) =
    \argmin_{z \in \R^p} \min_{\nu \in \R^n}\quad
    & (\overline{y}(x) - \underline{y}(x))^\top \nu + \underline{y}(x)^\top F(x,z) + \tilde{f}(x,z) \\
    \text{s.t.}\quad
    & \nu \geq \zero,\
    \nu - F(x,z) \geq \zero,\
    g(x,z) \leq 0.
\end{aligned}
\end{equation}

\paragraph{Ellipsoidal uncertainty sets.}
Another common form of uncertainty set is ellipsoidal uncertainty. Suppose a neural network model $h_\theta: \R^m \to \R^n \times \Sym_+^n$ predicts mean and covariance parameters $h_\theta(x) = (\mu_\theta(x), \Sigma_\theta(x))$, so that $\hat\Pcal(y \mid x; \theta) = \Gaussian(y \mid \mu_\theta(x), \Sigma_\theta(x))$ denotes a predicted conditional density, where $\Gaussian(\cdot \mid \mu, \Sigma)$ is the multivariate normal density function. In this case, we define the nonconformity score function based on the squared Mahalanobis distance \citep{johnstone_conformal_2021, sun_predict-then-calibrate_2023} 
\[
    s_\theta(x,y) = (y - \mu_\theta(x))^\top (\Sigma_\theta(x))^{-1} (y - \mu_\theta(x)),
\]
which yields uncertainty sets \eqref{eq:score_rep_uncertainty} that are ellipsoidal:
\[
    \Omega_\theta(x)
    = \{\hat{y} \mid (\hat{y}-\mu_\theta(x))^\top (\Sigma_\theta(x))^{-1} (\hat{y} - \mu_\theta(x)) \leq q\}.
\]
Let $L_\theta(x)$ denote the unique lower-triangular Cholesky factor of $\Sigma_\theta(x)$ (i.e., $\Sigma_\theta(x) = L_\theta(x) L_\theta(x)^\top$). Taking the dual of the inner maximization problem and invoking strong duality (see \Cref{sec:appendix_ellipsoid}), we transform the robust optimization problem \eqref{eq:z-star} into an equivalent form that is convex, and hence tractable, under \Cref{as:main}:
\begin{equation}\label{eq:z-star-ellipsoid-nonrobust}
\begin{aligned}
    z_\theta^\star(x) =
    \argmin_{z \in \R^p} \quad
    & \sqrt{q} \|L_\theta(x)^\top F(x,z)\|_2 + \mu_\theta(x)^\top F(x,z) + \tilde{f}(x,z) \\
    \text{s.t.}\quad
    & g(x,z) \leq 0,
\end{aligned}
\end{equation}

\subsection{Conformal uncertainty set calibration}
\label{sec:calibration}

As long as the uncertainty set $\Omega_\theta(x)$ can be expressed in the form \eqref{eq:score_rep_uncertainty}, we can use the split conformal prediction procedure at inference time to choose a value $q$ that ensures $\Omega_\theta(x)$ provides marginal coverage (\Cref{defn:coverage}) at any confidence level $1-\alpha$. The split conformal procedure assumes access to a calibration dataset $\Dc = \{(x_i,y_i)\}_{i=1}^M$ drawn exchangeably from $\Pcal$. We refer readers to \citet{angelopoulos_conformal_2023} for details on this procedure.

\begin{lemma}[from \citet{angelopoulos_conformal_2023}, Appendix D] \label{lemma:split_conformal_coverage}
Let $\Dc = \{(x_i,y_i)\}_{i=1}^M$ be a calibration dataset drawn exchangeably (e.g., i.i.d.) from $\Pcal$, and let $s_i = s_\theta(x_i, y_i)$. If $q = \Call{Quantile}{\{s_i\}_{i=1}^M,\ 1-\alpha}$ (see \Cref{alg:dauq}) is the empirical $\frac{\ceil{(M+1)(1-\alpha)}}{M}$-quantile of the set $\{s_i\}_{i=1}^M$ and $(x,y)$ is drawn exchangeably with $\Dc$, then $\Omega_\theta(x)$ has the marginal coverage guarantee
\[
    1-\alpha \leq \mathbb{P}_{x,y,\Dc}(y \in \Omega_\theta(x)) \leq 1-\alpha + \frac{1}{M+1}.
\]
\end{lemma}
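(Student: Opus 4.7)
The plan is to reduce the coverage statement to a statement about the rank of the test-point nonconformity score among the calibration scores, then exploit exchangeability. Let $s_{M+1} := s_\theta(x,y)$ be the (unobserved) nonconformity score at the test point. Because $s_\theta$ is a deterministic function of $(x_i, y_i)$ and $(x_1,y_1),\ldots,(x_M,y_M),(x,y)$ are exchangeable, the scores $(s_1, \ldots, s_M, s_{M+1})$ are themselves exchangeable. Moreover, by the definition of $\Omega_\theta$ in \eqref{eq:score_rep_uncertainty}, the coverage event $\{y \in \Omega_\theta(x)\}$ is identical to the event $\{s_{M+1} \leq q\}$, so it suffices to control $\mathbb{P}(s_{M+1} \leq q)$.

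Next, I would rewrite the empirical quantile in order-statistic form: taking the convention in \textsc{Quantile}$(\{s_i\}_{i=1}^M, 1-\alpha) = s_{(k)}$ where $k = \lceil (1-\alpha)(M+1)\rceil$ and $s_{(1)} \leq \cdots \leq s_{(M)}$ are the sorted calibration scores, the event $\{s_{M+1} \leq q\}$ is equivalent to the event that the rank of $s_{M+1}$ in the augmented sample $(s_1,\ldots,s_{M+1})$ is at most $k$ (breaking ties in favor of coverage). By exchangeability, the rank of $s_{M+1}$ in this augmented sample is uniformly distributed on $\{1,\ldots,M+1\}$ almost surely, provided ties occur with probability zero (\emph{e.g.}, when the score distribution is continuous); more generally, uniform rank holds after a symmetric randomized tiebreak, which only helps the lower bound. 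This yields
\begin{equation*}
\mathbb{P}(y \in \Omega_\theta(x)) \;=\; \mathbb{P}(\text{rank}(s_{M+1}) \leq k) \;=\; \frac{\lceil (1-\alpha)(M+1) \rceil}{M+1}.
\end{equation*}

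From here both bounds follow from elementary inequalities on the ceiling: $(1-\alpha)(M+1) \leq \lceil (1-\alpha)(M+1)\rceil < (1-\alpha)(M+1) + 1$, so dividing by $M+1$ gives the lower bound $1-\alpha$ and the upper bound $1-\alpha + \frac{1}{M+1}$. The only real subtlety is the treatment of ties in the score distribution, which is the main obstacle to the clean upper bound; I would handle it by invoking the standard assumption that the calibration and test scores are almost surely distinct (continuity of $s_\theta(X,Y)$ under $\Pcal$), or equivalently by breaking ties uniformly at random, so that the rank of $s_{M+1}$ is exactly uniform on $\{1,\ldots,M+1\}$. With that caveat, the entire argument is essentially a two-line consequence of exchangeability plus the ceiling inequality, which is why the result is cited rather than reproved in the paper.
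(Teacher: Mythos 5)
The paper does not prove this lemma itself---it imports it directly from \citet{angelopoulos_conformal_2023}---and your argument is precisely the standard proof underlying that cited result: exchangeability of the augmented score vector, the equivalence of the coverage event with the rank of the test score being at most $k=\ceil{(1-\alpha)(M+1)}$, uniformity of that rank, and the ceiling inequality. Your proof is correct, including the correct observation that the upper bound (but not the lower bound) requires almost-surely distinct scores or a symmetric randomized tiebreak, so there is nothing to add.
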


\begin{algorithm}[t!]
\caption{End-to-end conformal calibration for robust decisions under uncertainty}
\label{alg:dauq}
\begin{algorithmic}[0]
\Function{Train}{training data $D = \{(x_i, y_i)\}_{i=1}^N$, uncertainty level $\alpha$, initial model parameters $\theta$}
\For{mini-batch $B \subset \{1, \dotsc, N\}$}
    \State Randomly split batch: $B = (B_\text{cal}, B_\text{pred})$
    \State Compute $q = \Call{Quantile}{\{s_\theta(x_i,y_i)\}_{i \in B_\text{cal}},\ 1-\alpha}$
    \For{$i \in B_\text{pred}$}
        \State Solve for robust decision $z_\theta^\star(x_i)$ using \eqref{eq:z-star-box-nonrobust}, \eqref{eq:z-star-ellipsoid-nonrobust}, or \eqref{eq:z-star-picnn-nonrobust}
        \State Compute gradient of task loss: $d\theta_i = \partial f(x_i,y_i,z_\theta^\star(x_i)) / \partial\theta$
    \EndFor
    \State Update $\theta$ using gradients $\sum_{i \in B_\text{pred}} d\theta_i$
\EndFor
\EndFunction
\smallskip
\Function{Inference}{model parameters $\theta$, calibration data $\Dc = \{(x_i,y_i)\}_{i=1}^M$, uncertainty level $\alpha$, input $x$}
\State Compute $q = \Call{Quantile}{\{s_\theta(\tilde{x},\tilde{y})\}_{(\tilde{x},\tilde{y}) \in \Dc},\ 1-\alpha}$
\State \Return robust decision $z_\theta^\star(x)$ using \eqref{eq:z-star-box-nonrobust}, \eqref{eq:z-star-ellipsoid-nonrobust}, or \eqref{eq:z-star-picnn-nonrobust}
\EndFunction
\smallskip
\Function{Quantile}{scores $S = \{s_i\}_{i=1}^M$, level $\beta$}
\State $s_{(1)}, \dotsc, s_{(M+1)} = \Call{SortAscending}{S \cup \{+\infty\}}$
\State \Return $s_{(\ceil{(M+1)\beta})}$
\EndFunction
\end{algorithmic}
\end{algorithm}

We use split conformal prediction, rather than full conformal prediction, both for computational tractability and to avoid the problem of nonconvex uncertainty sets that can arise from the full conformal approach, as noted in \citet{johnstone_conformal_2021}. For the rest of this paper, we assume $\alpha \in [\frac{1}{M+1}, 1)$ so that $q = \Call{Quantile}{\{s_i\}_{i=1}^M,\ 1-\alpha} < \infty$ is finite. Thus, for appropriate choices of the score function $s_\theta$, the uncertainty set $\Omega_\theta(x)$ is not unbounded.

While the split conformal prediction procedure in \Cref{lemma:split_conformal_coverage} ensures that the uncertainty set $\Omega_\theta(x)$ satisfies $(1 - \alpha)$ coverage at inference time, this process does not address the question of \emph{training} the uncertainty set $\Omega_\theta(x)$ (via the score function $s_\theta$) to ensure optimal task performance while maintaining coverage. In \Cref{sec:conformal-training}, we propose applying a separate differentiable conformal prediction procedure during training to address this challenge.

\subsection{End-to-end training and calibration}
\label{sec:conformal-training}

Thus far, we have discussed how to calibrate an uncertainty set $\Omega_\theta(x)$ of the form \eqref{eq:score_rep_uncertainty} to ensure coverage, and we described two choices of score function $s_\theta$ parametrizing common box and ellipsoidal uncertainty sets. However, to ensure that the uncertainty sets $\Omega_\theta(x)$ \emph{both} guarantee coverage \emph{and} ensure optimal downstream task performance, it is necessary to design an end-to-end training methodology that can incorporate both desiderata in a fully differentiable manner. We propose such a methodology in \Cref{alg:dauq}.

Our end-to-end training approach minimizes the empirical task loss $\ell(\theta) = \frac{1}{N} \sum_{i=1}^N \ell_i(\theta)$ using minibatch gradient descent, where $\ell_i(\theta) = f(x_i, y_i, z_\theta^\star(x_i))$. This requires differentiating through both the robust optimization problem as well as the conformal prediction step. The gradient of the task loss on a single instance is $\deriv{\ell_i}{\theta} = \pderiv{f}{z}(x_i, y_i, z_\theta^\star(x_i)) \cdot \pderiv{z_\theta^\star}{\theta}(x_i)$, where $\pderiv{z_\theta^\star}{\theta}(x_i)$ is computed by differentiating through the Karush–Kuhn–Tucker (KKT) conditions of the convex reformulation of the optimization problem \eqref{eq:z-star} (i.e., the problems \eqref{eq:z-star-box-nonrobust}, \eqref{eq:z-star-ellipsoid-nonrobust}) following the approach of \citet{amos_optnet_2017}, under mild assumptions on the differentiability of $f$ and $g$. Note that the gradient of any convex optimization problem can be computed with respect to its parameters as such \citep[Appendix B]{agrawal_differentiable_2019}.

To include calibration during training, we assume that for every $(x,y)$ in our training set, $s_\theta(x,y)$ is differentiable w.r.t. $\theta$ almost everywhere; this assumption holds for common nonconformity score functions, including those used in this paper. We then adopt the conformal training approach \citep{stutz_learning_2022} in which a separate $q$ is chosen in each minibatch, as shown in \Cref{alg:dauq}. The chosen $q$ depends on $\theta$ (through $s_\theta$), and $z_\theta^\star(x_i)$ depends on the chosen $q$. Therefore $\pderiv{z_\theta^\star}{\theta}$ involves calculating $\pderiv{z_\theta^\star}{q} \pderiv{q}{\theta}$, where $\pderiv{q}{\theta}$ requires differentiating through the empirical quantile function. Whereas \citet{stutz_learning_2022} uses a smoothed approximate quantile function for calculating $q$, we find the smoothing unnecessary, as the gradient of the empirical quantile function is unique and well-defined almost everywhere. Importantly, our exact gradient is both more computationally efficient and simpler to implement than the smoothed approximate quantile approach. See \Cref{sec:appendix_diff_conformal} for more details.

After training has concluded and we have performed the final conformal calibration step, the resulting model enjoys the following theoretical guarantee on performance (\emph{cf.} \citet[Proposition 1]{sun_predict-then-calibrate_2023}).

\begin{proposition} \label{prop:performance_bound}
Under the same assumptions as \Cref{lemma:split_conformal_coverage}, the task loss satisfies the following bound with probability at least $1-\alpha$ (over $x,y$, and the calibration set $\Dc$):
\begin{equation}\label{eq:robust-task-loss-ineq}
    f(x,y,z_\theta^\star(x))
    \leq \left( \min_{z \in \R^p} \max_{\hat{y} \in \Omega_\theta(x)} f(x, \hat{y}, z)
    \ \mathrm{s.t.}\
    g(x,z) \leq 0 \right).
\end{equation}
\end{proposition}
\begin{proof}
When $y \in \Omega_\theta(x)$, the definition of $z_\theta^\star(x)$ \eqref{eq:z-star} guarantees that \eqref{eq:robust-task-loss-ineq} holds. By \Cref{lemma:split_conformal_coverage}, $\mathbb{P}_{x,y,\Dc}(y \in \Omega_\theta(x)) \geq 1 - \alpha$.
\end{proof}

We note that in practice, training models from scratch using only the task loss sometimes leads to poor local optima. Instead, in our experiments, we pretrain our models with a standard loss function (e.g., negative log-likelihood for the mean-variance predictor used in ellipsoidal uncertainty sets), and then we fine-tune the model using the task loss. This is a standard heuristic commonly used in the decision-focused learning and predict-then-optimize literature \citep{donti_task-based_2017,mandi_decision-focused_2024}. \Cref{sec:appendix:experiment-details} provides more details on our training procedure.

\section{Representing General Convex Uncertainty Sets via PICNNs}
\label{sec:picnn}

\begin{figure}
\centering
\includegraphics[width=3in]{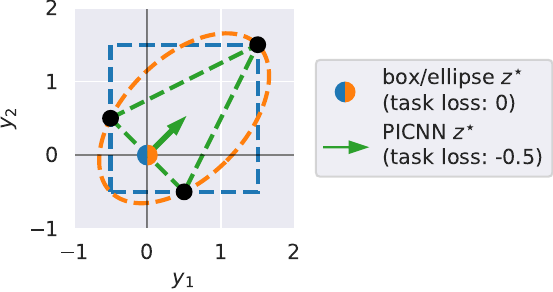}%
\caption{Consider a robust portfolio optimization problem with 2 assets, where $y \in \R^2$ is a random vector of asset returns, and the decision $z \in \R^2$ represents portfolio weights: $\max_z \min_{\hat{y} \in \Omega} -z^\top \hat{y} \ \text{s.t.}\ z \geq 0,\ \one^\top z \leq 1$. Let the distribution of asset returns $y$ be uniform over 3 discrete points (black). The optimal box (blue), ellipse (orange), and PICNN (green) uncertainty sets are shown with their robust decision vectors $z^\star$. The flexibility of the PICNN uncertainty representation allows it to achieve the lowest expected task loss.}
\label{fig:picnn_superiority}
\end{figure}

The previous section discussed how to train calibrated box and ellipsoidal uncertainty sets end-to-end to optimize the downstream task loss. However, 
both box and ellipsoidal uncertainty sets have restrictive shapes which may yield suboptimal task performance. If $\Omega_\theta(x)$ could represent \emph{any} arbitrary convex uncertainty set, this more expressive class would enable obtaining better task loss. \Cref{fig:picnn_superiority} illustrates an example where a general convex uncertainty set representation provides a clear advantage over box and ellipsoid uncertainty.

To this end, we propose to directly learn a partially-convex nonconformity score function $s_\theta: \R^m \times \R^n \to \R$ that is convex only in the second input vector. Fixing $x$, any $q$-sublevel set $\{\hat{y} \in \R^n \mid s_\theta(x,\hat{y}) \leq q\}$ of $s_\theta$ is a convex set, and likewise every family of convex sets can be expressed as the $q$-sublevel sets of some partially-convex function. To implement this approach, we are faced with two questions.

\emph{1. How should we parametrize the score function $s_\theta$ so $\Omega_\theta(x)$ can approximate \emph{arbitrary} convex sets?} A natural answer is to parametrize $s_\theta$ with a partially input-convex neural network (PICNN) \citep{amos_input_2017}, which can efficiently approximate any partially-convex function \citep{chen_optimal_2019}. We consider a PICNN defined as $s_\theta(x,y) = W_L \sigma_L + V_L y + b_L$, where
\begin{equation}\label{eq:picnn}
\begin{aligned}
    \sigma_0 &= \zero, \qquad u_0 = x,
        & W_l &= \bar{W}_l \diag([\hat{W}_l u_l + w_l]_+) \\
    u_{l+1} &= \ReLU\left(R_l u_l + r_l\right),
        & V_l &= \bar{V}_l \diag(\hat{V}_l u_l + v_l) \\
    \sigma_{l+1} &= \ReLU\left( W_l \sigma_l + V_l y + b_l \right),
        & b_l &= \bar{B}_l u_l + \bar{b}_l,
\end{aligned}
\end{equation}
with weights $\theta = (R_l, r_l, \bar{W}_l, \hat{W}_l, w_l, \bar{V}_l, \hat{V}_l, v_l, \bar{B}_l, \bar{b}_l)_{l=0}^L$. The matrices $\bar{W}_l$ are constrained to be entrywise nonnegative to ensure convexity of $s_\theta$ with respect to $y$. For ease of notation, we assume all hidden layers $\sigma_1, \dotsc, \sigma_L$ have the same dimension $d$.

\emph{2. Does the chosen parametrization of $\Omega_\theta(x)$ (via PICNNs) yield a tractable reformulation of the CRO problem \eqref{eq:z-star}?} Fortunately, we show in the following theorem that the answer is yes.

\begin{theorem} \label{thm:picnn-tractable}
    Let $\Omega_\theta(x) = \{\hat{y} \in \R^n \mid s_\theta(x,\hat{y}) \leq q\}$, where $s_\theta$ is a PICNN as defined in \eqref{eq:picnn}. Then, under \Cref{as:main}, the CRO problem \eqref{eq:z-star} with uncertainty set $\Omega_\theta(x)$ is equivalent to the following convex (and hence tractable) minimization problem:
    \begin{equation}\label{eq:z-star-picnn-nonrobust}
    \begin{aligned}
        z_\theta^\star(x) =
        \argmin_{z \in \R^p} \min_{\nu \in \R^{2Ld + 1}} \quad &
        b(\theta,q)^\top \nu + \tilde{f}(x,z) \\
        \mathrm{s.t.}\quad &
        A(\theta)^\top \nu = \begin{bmatrix} F(x,z) \\ \zero \end{bmatrix}, \
        \nu \geq \zero, \
        g(x,z) \leq 0
    \end{aligned}
    \end{equation}
    where $A(\theta) \in \R^{(2Ld + 1) \times (n + Ld)}$ and $b(\theta, q) \in \R^{2Ld + 1}$ are constructed from the weights $\theta$ of the PICNN \eqref{eq:picnn}, and $b$ also depends on $q$.
\end{theorem}

We prove \Cref{thm:picnn-tractable} in \Cref{sec:appendix_picnn}; the main idea is that when $\Omega_\theta(x)$ is a sublevel set of a PICNN, we can equivalently reformulate the inner maximization problem in \eqref{eq:z-star} as a linear program and take the dual to yield a tractable minimization problem.

Since the PICNN uncertainty sets are of the same form as \eqref{eq:score_rep_uncertainty} and yield a tractable convex reformulation \eqref{eq:z-star-picnn-nonrobust} of the CRO problem \eqref{eq:z-star}, we can apply the split conformal procedure detailed in \Cref{sec:calibration} to choose $q \in \R$ and obtain coverage guarantees on $\Omega_\theta(x)$, and we can employ the same end-to-end training methodology from \Cref{sec:conformal-training} to train calibrated uncertainties end-to-end using the downstream task loss. In some cases during training, the inner maximization problem of \eqref{eq:z-star} with PICNN-parametrized uncertainty set may be unbounded (if $\Omega_\theta(x)$ is not compact) or infeasible (if the chosen $q$ is too small causing $\Omega_\theta(x)$ to be empty). This will lead, respectively, to an infeasible or unbounded equivalent problem \eqref{eq:z-star-picnn-nonrobust}. We can avoid this concern by adjusting the PICNN architecture to ensure its sublevel sets are compact and by suitably increasing $q$ when needed to ensure $\Omega_\theta(x)$ is never empty. Such modifications do not change the general form of the problem \eqref{eq:z-star-picnn-nonrobust} and preserve the marginal coverage guarantee for the uncertainty set $\Omega_\theta(x)$; see \Cref{sec:appendix_picnn_modifications} for details.

\section{Experiments \label{sec:experiments}}

In this section, we present experimental results for our E2E method against several ETO baselines. Code to reproduce our results is available on GitHub.\footnote{\url{https://github.com/chrisyeh96/e2e-conformal}}

\subsection{Problem descriptions}

We consider two tasks: price forecasting for battery storage operation and portfolio optimization. Their task loss functions and constraints satisfy \Cref{as:main}.

\paragraph{Price forecasting for battery storage.}
This problem comes from \citet{donti_task-based_2017}, where a grid-scale battery operator predicts electricity prices $y \in \R^T$ over a $T$-step horizon and uses the predicted prices to decide a battery charge/discharge schedule for price arbitrage. The input features $x$ include the past day's prices and temperature, the next day’s energy load forecast and temperature forecast, binary indicators of weekends or holidays, and yearly sinusoidal features. The operator decides how much to charge ($\zin \in \R^T$) or discharge ($\zout \in \R^T$) the battery, which changes the battery's state of charge ($\zstate \in \R^T$). The battery has capacity $B$, charging efficiency $\gamma$, and maximum charging/discharging rates $\cin$ and $\cout$. The task loss function represents the multiple objectives of maximizing profit, flexibility to participate in other markets by keeping the battery near half its capacity (with weight $\lambda$), and battery health by discouraging rapid charging/discharging (with weight $\epsilon$):
\[
    f(y,z) = \sum_{t=1}^T y_t (\zin - \zout)_t + \lambda \norm{\zstate - \frac{B}{2} \one}^2
    + \epsilon \norm{\zin}^2 + \epsilon \norm{\zout}^2.
\]
The constraints are, for all $t=1,\dotsc,T$,
\begin{align*}
    & \zstate_0 = B/2,\quad
    \zstate_t = \zstate_{t-1} - \zout_t + \gamma \zin_t, \\
    & 0 \leq \zin \leq \cin,\quad
    0 \leq \zout \leq \cout,\quad
    0 \leq \zstate_t \leq B.
\end{align*}
Following \citet{donti_task-based_2017}, we set $T = 24$, $B=1$, $\gamma = 0.9$, $\cin = 0.5$, $\cout = 0.2$, $\lambda = 0.1$, and $\epsilon = 0.05$.

\paragraph{Portfolio optimization.}
We adopt the portfolio optimization setting and synthetic dataset from \citet{chenreddy_end--end_2024}, where the prediction targets $y \in \R^n$ are the returns of a set of $n$ securities, and the decision $z \in \R^n$ sets portfolio weights. The task loss is $f(y,z) = -y^\top z$, with constraints $z \geq 0,\ \one^\top z = 1$. The synthetic dataset consists of $(x,y) \in \R^{2 \times 2}$ drawn from a mixture of three 4-D multivariate normal distributions. We provide data details in \Cref{sec:appendix_data} and experimental results in \Cref{sec:appendix:portfolio}. The results are similar to those for battery storage, except that portfolio optimization is a lower-dimensional and easier problem.

\subsection{Baseline methods}

We implemented several ``estimate-then-optimize'' (ETO) baselines, listed below, to compare against our end-to-end (E2E) method. These two-stage ETO baselines are trained using task-agnostic losses such as pinball loss or negative log-likelihood (NLL). To ensure a fair comparison against our E2E method, we also apply conformal calibration to each ETO method after training to satisfy coverage.

\begin{itemize}[nosep,left=1em]
\item \ETO\ denotes models with identical neural network architectures to our E2E models, differing only in the loss function during training. The box uncertainty \ETO\ model is trained with pinball loss to predict the $\frac{\alpha}{2}$ and $1-\frac{\alpha}{2}$ quantiles. The ellipsoidal uncertainty \ETO\ model is trained with a multivariate normal negative log-likelihood loss. For the PICNN \ETO\ model, we train $s_\theta$ using a negative log-likelihood loss by interpreting $s_\theta$ as an energy function---i.e., $\hat{\Pcal}_\theta(y \mid x) \propto \exp(-s_\theta(x,y))$---yielding the loss
$
    \NLL(\theta) = \ln s_\theta(x, y) + \ln Z_\theta(x),
$
where $Z_\theta(x) = \int_{\tilde{y} \in \R^n} \exp(-s_\theta(x, \tilde{y}))\, \diff\tilde{y}$, following the approach of \citet{lin_how_2023}. More details of the \ETO\ models are given in \Cref{sec:appendix:experiment-details}.

\item \ETOsll\ is our implementation of the box and ellipsoid uncertainty ETO methods from \citet{sun_predict-then-calibrate_2023}. Unlike \ETO, \ETOsll\ first trains a point estimate model (without uncertainty) with mean-squared error loss. Then, \ETOsll\ box and ellipsoidal uncertainty sets are derived from training a separate quantile regressor using pinball loss to predict the $(1-\alpha)$-quantiles of absolute residuals or $\ell_2$-norm of residuals of the point estimate. Unlike \ETO\ which can learn ellipsoidal uncertainty sets with different covariance matrices for each input $x$, the \ETOsll\ ellipsoidal uncertainty sets all share the same covariance matrix (up to scale).
\item \ETOjc\ is our implementation of the ellipsoid uncertainty ETO method by \citet{johnstone_conformal_2021}. Like \ETOsll, \ETOjc\ also first trains a point estimate model (without uncertainty) with mean-squared error loss. \ETOjc\ uses the same covariance matrix (with the same scale) for each input $x$.
\end{itemize}

\subsection{Battery storage problem results}

\begin{figure}[t]
\centering
\begin{minipage}[b][1.4in][c]{0.10\textwidth}\centering\small
no \mbox{distribution} shift
\end{minipage}\hfill
\includegraphics[width=0.88\textwidth]{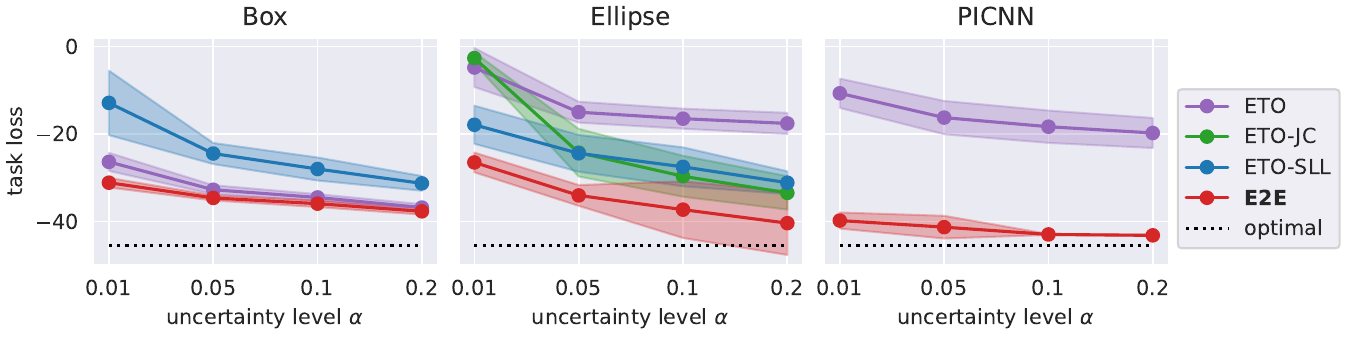}
\begin{minipage}[b][1.4in][c]{0.10\textwidth}\centering\small
with \mbox{distribution} shift
\end{minipage}\hfill
\includegraphics[width=0.88\textwidth]{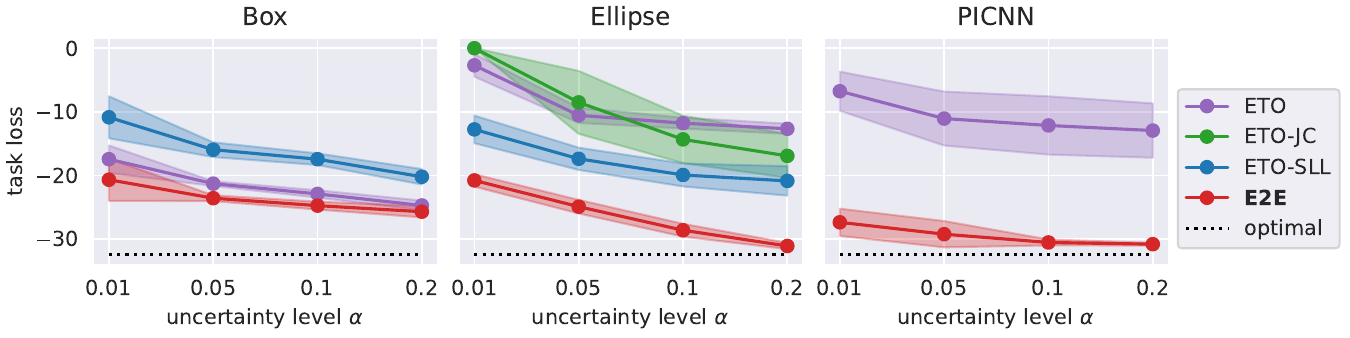}%
\caption{Task loss performance (mean $\pm$1 stddev across 10 runs) for the battery storage problem with no distribution shift (top) and with distribution shift (bottom). Lower values are better.}
\label{fig:battery_task}
\end{figure}

\begin{figure}[t]
\centering
\begin{minipage}[b][1.4in][c]{0.10\textwidth}\centering\small
no \mbox{distribution} shift
\end{minipage}\hfill
\includegraphics[width=0.88\textwidth]{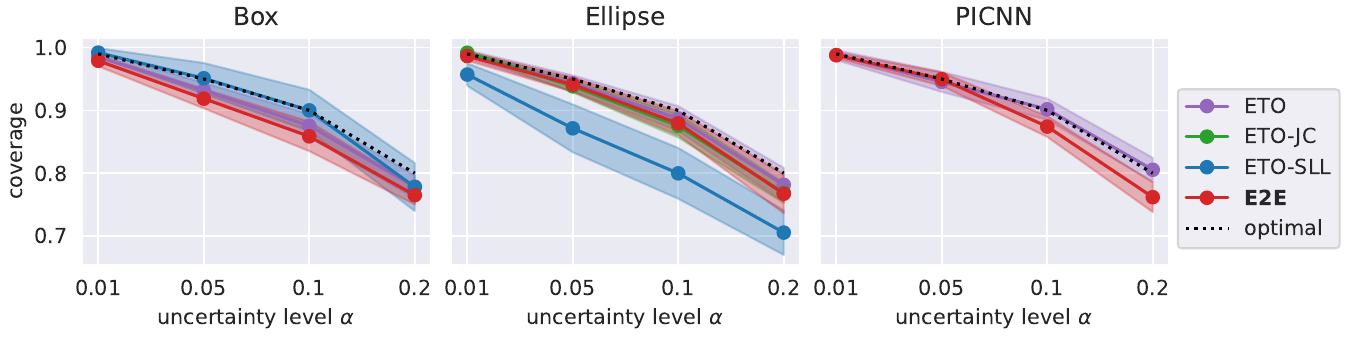}
\begin{minipage}[b][1.4in][c]{0.10\textwidth}\centering\small
with \mbox{distribution} shift
\end{minipage}\hfill
\includegraphics[width=0.88\textwidth]{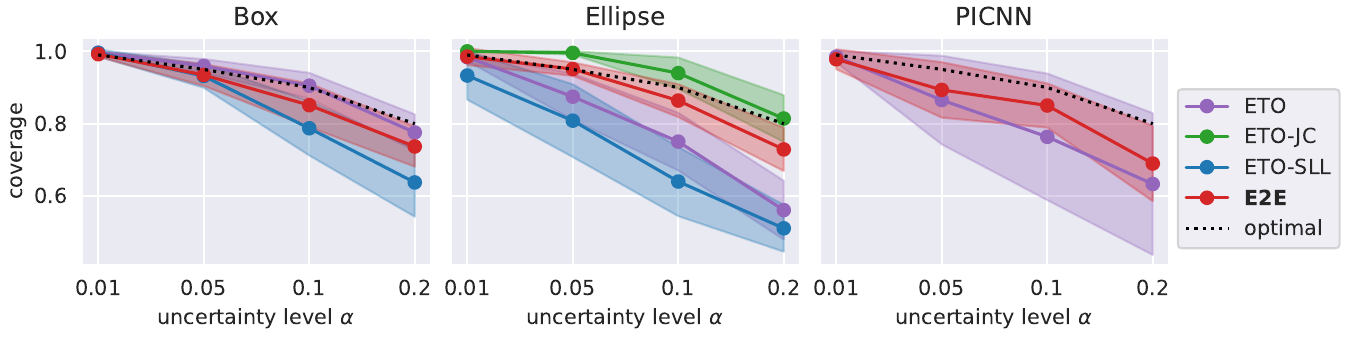}%
\caption{
Coverage (mean $\pm$1 stddev across 10 runs) for the battery storage problem with no distribution shift (top) and with distribution shift (bottom). The dotted black line indicates the target coverage level $1-\alpha$. Our E2E models achieve similar coverage to the ETO baselines, confirming that the lower task loss of our E2E models does not come at the expense of worse coverage.}
\label{fig:battery_coverage}
\end{figure}

\Cref{fig:battery_task} (top) compares task loss performance for different uncertainty levels ($\alpha \in \{.01, .05, .1, .2\}$) and the different uncertainty set representations for the ETO baselines against our proposed E2E methodology on the battery storage problem with no distribution shift. Our E2E approach consistently yields improved performance over all ETO baselines, for all three uncertainty set parametrizations, and over all tested uncertainty levels $\alpha$. Moreover, the PICNN uncertainty representation, when trained end-to-end, provides up to 42\% relative improvement in performance over the best ETO box uncertainty set and up to 209\% relative improvement over the best ETO ellipse uncertainty set. We also show the corresponding coverage obtained by the learned uncertainty sets in \Cref{fig:battery_coverage} (top); all models obtain coverage close to the target level, confirming that the improvements in task loss performance from our E2E approach do not come at the cost of worse coverage.

In \Cref{sec:appendix:experiment-results}, we present additional results on the battery storage problem, including value-at-risk (VaR) and conditional value-at-risk (CVaR) metrics which show that the E2E approach consistently outperforms the ETO baselines on tail risk as well. In addition, \Cref{tab:storage-perf} shows the per-epoch computational time required by our E2E approach compared with the two-stage baselines. While E2E requires additional training time, the training time is dominated by time spent solving the conditional robust optimization problem \eqref{eq:z-star} for every training example at every epoch. The development of GPU-accelerated solvers \citep{blin_accelerate_2024} may help significantly speed up the optimization, though we do not explore GPU-accelerated optimization in our implementation.

\subsection{Performance under distribution shift}

The aforementioned results were produced without distribution shift, where our training and test sets were sampled uniformly at random, thus ensuring exchangeability and guaranteeing marginal coverage. In this section, we evaluate our method on the more realistic setting with distribution shift by splitting our data temporally; our models are trained on the first 80\% of days and evaluated on the last 20\% of days. \Cref{fig:storage_distshift_data} plots the underlying time series data and visually highlights the distribution shift. Figures~\ref{fig:battery_task} (bottom) and \ref{fig:battery_coverage} (bottom) mirror Figures~\ref{fig:battery_task} (top) and \ref{fig:battery_coverage} (top), except that there is now distribution shift. We again find that our E2E approach consistently yields improved performance over all ETO baselines, for all three uncertainty set parametrizations, and for all tested uncertainty levels $\alpha$. Likewise, the PICNN uncertainties, when trained end-to-end, improve on the performance offered by box and ellipsoidal uncertainty. We unsurprisingly find that, under distribution shift, the models do not provide the same level of coverage guaranteed in the i.i.d. case, as the exchangeability assumption needed for conformal prediction no longer holds. The ellipsoidal and PICNN models tend to provide worse coverage than the box uncertainty, which we believe reflects how ellipsoidal and PICNN uncertainty sets offer greater representational power, and thus might be fitting too closely to the pre-shift distribution, which impacts robustness under distribution shift. Devising methods to anticipate distribution shift when training these more expressive models, and in particular the PICNN-based uncertainty, remains an interesting avenue for future work.

\section{Conclusion}

In this work, we develop the first end-to-end methodology for training predictive models with uncertainty estimates (with calibration enforced differentiably throughout training) that are utilized in downstream conditional robust optimization problems. We demonstrate an approach utilizing partially input-convex neural networks (PICNNs) to represent general convex uncertainty regions, and we perform extensive experiments on a battery storage application and a portfolio optimization task. Whereas prior works on two-stage estimate-then-optimize approaches emphasized ``the convenience brought by the disentanglement of the prediction and the uncertainty calibration'' \citep{sun_predict-then-calibrate_2023}, our results highlight that such ``convenience'' comes at a substantial cost; our end-to-end approach, combined with the expressiveness of the PICNN representation, has clear performance gains over the traditional two-stage methods.

A number of interesting directions for future work on learning decision-aware uncertainty in an end-to-end manner remain. First, while our PICNN-based uncertainty set representation allows the parametrization of general convex uncertainty sets, future work may explore \emph{nonconvex} uncertainty regions. Doing so may require eschewing the analytical methods for differentiating through convex optimization problems and instead use, e.g., policy gradient methods for passing gradients through general stochastic and robust optimization problems. Second, developing end-to-end methods to target conditional calibration (as opposed to marginal calibration) remains an active research direction. Finally, one may explore other types of constraints besides uncertainty set-based robustness, such as value-at-risk (VaR) or conditional value-at-risk (CVaR) constraints.

\subsubsection*{Acknowledgments}

We thank Priya Donti for helpful discussions. The authors acknowledge support from an NSF Graduate Research Fellowship (DGE-2139433); NSF Grants CNS-2146814, CPS-2136197, CNS-2106403, and NGSDI-2105648; gifts from Amazon, OpenAI, and Latitude AI; and the Caltech Resnick Sustainability Institute.

\newpage
\bibliography{refs_zotero}
\bibliographystyle{tmlr}

\newpage
\appendix
\renewcommand{\thefigure}{A\arabic{figure}}
\renewcommand{\thetable}{A\arabic{table}}
\setcounter{figure}{0}
\setcounter{table}{0}
\section{Appendix: Additional experimental results}
\label{sec:appendix:experiment-results}

\subsection{Experimental results: Battery storage}

In \Cref{fig:battery_task_var,fig:battery_task_cvar}, we plot the $(1-\alpha)$-value-at-risk (VaR) and $(1-\alpha)$-conditional value-at-risk (CVaR) \citep{uryasev_conditional_2001} of the task loss obtained via our method on the battery storage problem to evaluate model robustness. In particular, for each target coverage level $1 - \alpha$, we evaluate $\mathrm{VaR}^{1-\alpha}[\text{task loss}]$ and $\mathrm{CVaR}^{1-\alpha}[\text{task loss}]$ for both the ETO baseline(s) and our E2E methodologies across box, ellipsoid, and PICNN uncertainty sets. In general, it is clear that our E2E methodology uniformly improves over ETO, and while the performance of E2E ellipsoid is close to that of E2E PICNN, the latter appears to perform better for certain $\alpha$. These results highlight that our methodology improves not only the average task loss, but robustness more generally, even when compared with the robust ETO baselines.

The optimal task losses shown in black dotted lines in \Cref{fig:battery_task,fig:battery_task_var,fig:battery_task_cvar} are the lowest achievable task loss on the test set given perfect knowledge of the target $y$. The optimal task loss is calculated for each example $(x,y)$ in the test set as $f(x,y,z_\text{opt}^\star)$ where
\[
    z_\text{opt}^\star
    \ = \
    \argmin_{z \in \R^p} f(x,y,z)
    \ \text{s.t.}\
    g(x,z) \leq 0.
\]

\paragraph{Performance.}
In \Cref{tab:storage-perf}, we report the wall-clock time for pretraining, optimization, and end-to-end training. These times were measured on a machine with 2$\times$ AMD EPYC 7513 32-Core Processors, 1TiB RAM, and 4 NVIDIA A100 GPUs (although only 1 of the GPUs was used in these experiments). The ``pretrain'' column gives the time per epoch of training with the standard loss function (pinball loss for Box, negative log-likelihood loss for Ellipse and PICNN). The ``optimize'' column gives the time needed to compute the decision $z^\star_\theta(x)$ given a pretrained model. The ``E2E'' column gives the time per epoch of E2E training, which requires computing $z^\star_\theta(x)$ for each training and calibration example. Evidently, the bulk of the time spent on E2E training comes from computing the decision $z_\theta^\star(x)$; the additional overhead from computing the gradient through the KKT conditions of the optimization problem is much less than solving the optimization problem itself.

We note that in theory, E2E times should always be higher than the optimization time. The optimization time accounts for the time required to compute $z_\theta^\star(x)$, and E2E training additional accounts for the time required to compute the gradient $\pderiv{}{\theta} z_\theta^\star(x)$. However, in practice, the optimization time depends heavily on the choice of numerical solver. In our implementation, we use the default \texttt{cvxpy} solver (Clarabel) for the optimization step in ETO, whereas we use the default \texttt{cvxpylayers} solver (SCS) during E2E training. In the case of the Ellipse and PICNN uncertainty sets, the SCS solver is generally faster than Clarabel; hence, \Cref{tab:storage-perf} counterintuitively reports lower E2E training times for Ellipse and PICNN than optimization times.

We further note that the optimization and E2E training times reflect our particular implementation, but significantly faster times are likely possible. In particular, we solve all of the optimization problems (in both the ETO optimization step and during E2E training) using CPU-based solvers; however, recently developed GPU-accelerated solvers may be able to reduce the optimization time required by orders of magnitude, especially when solving a batch of problems with the same structure \citep{blin_accelerate_2024}.

\begin{figure}[t]
\centering
\begin{minipage}[b][1.4in][c]{0.10\textwidth}\centering\small
no \mbox{distribution} shift
\end{minipage}\hfill
\includegraphics[width=0.88\textwidth]{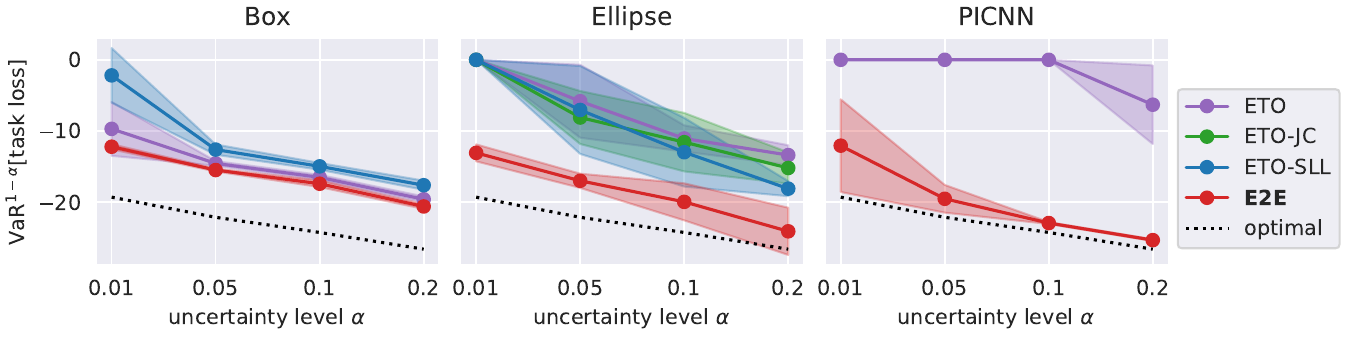}
\begin{minipage}[b][1.4in][c]{0.10\textwidth}\centering\small
with \mbox{distribution} shift
\end{minipage}\hfill
\includegraphics[width=0.88\textwidth]{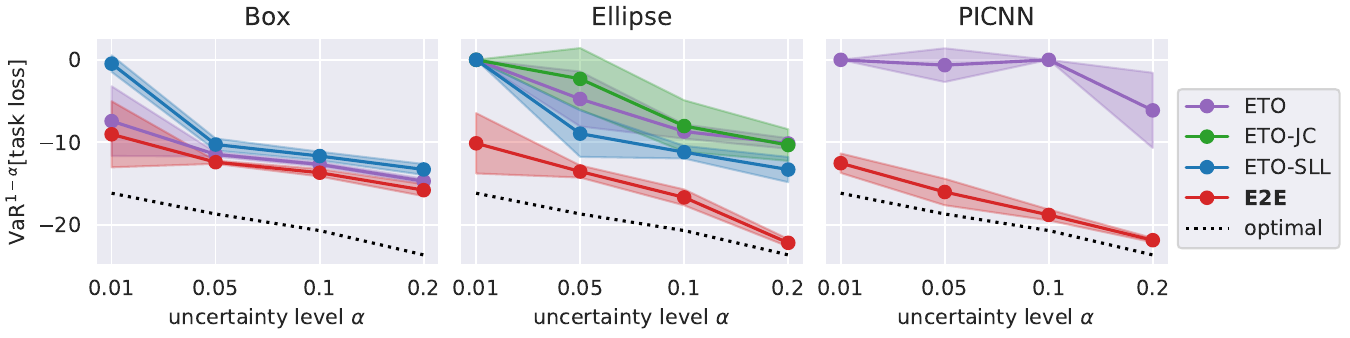}%
\caption{Similar to \Cref{fig:battery_task}, except that the value plotted is the VaR${}^{1-\alpha}[\text{task loss}]$ (mean $\pm$1 stddev across 10 runs) on the test set for the battery storage problem with no distribution shift (top) and with distribution shift (bottom). Lower values are better.}
\label{fig:battery_task_var}
\end{figure}

\begin{figure}[t]
\centering
\begin{minipage}[b][1.4in][c]{0.10\textwidth}\centering\small
no \mbox{distribution} shift
\end{minipage}\hfill
\includegraphics[width=0.88\textwidth]{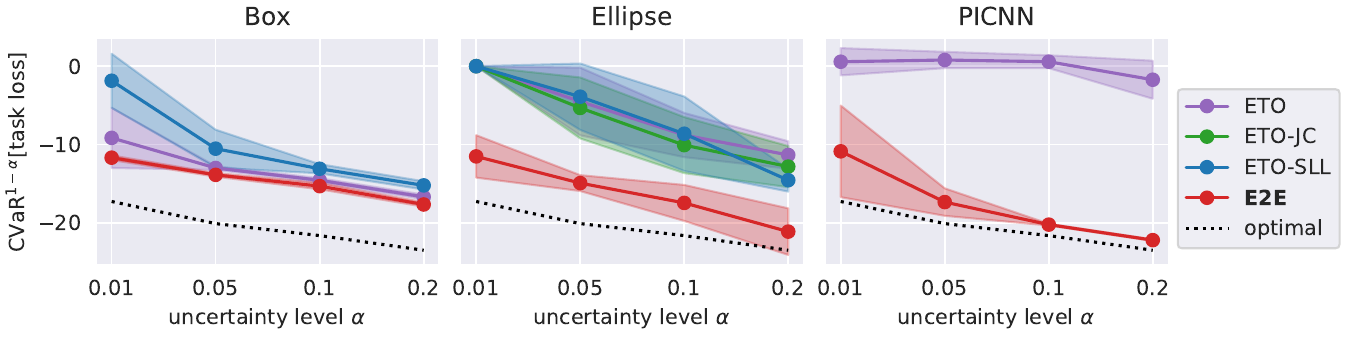}
\begin{minipage}[b][1.4in][c]{0.10\textwidth}\centering\small
with \mbox{distribution} shift
\end{minipage}\hfill
\includegraphics[width=0.88\textwidth]{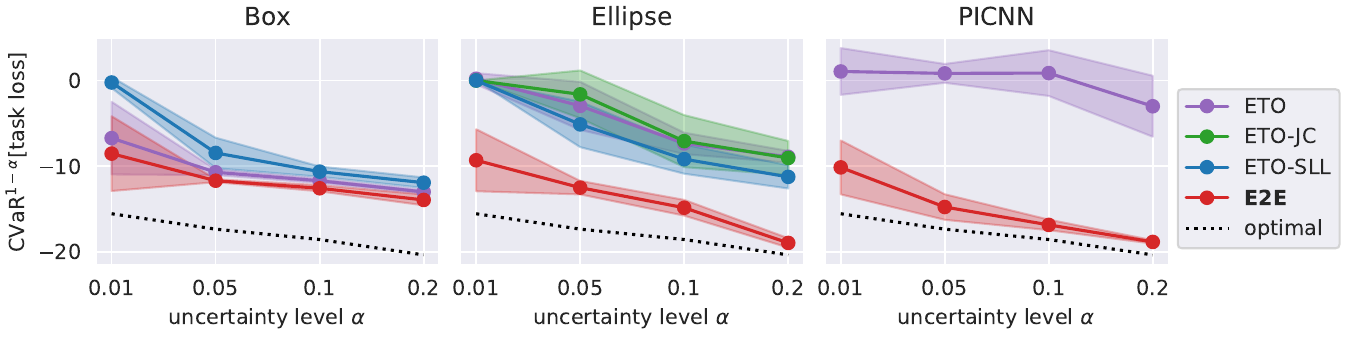}%
\caption{Similar to \Cref{fig:battery_task}, except that the value plotted is the CVaR${}^{1-\alpha}[\text{task loss}]$ (mean $\pm$1 stddev across 10 runs) on the test set for the battery storage problem with no distribution shift (top) and with distribution shift (bottom). Lower values are better.}
\label{fig:battery_task_cvar}
\end{figure}

\begin{table}[t]
\centering
\caption{Time per epoch of pretraining, optimization, and end-to-end training, measured in seconds (wall-clock time). Values are mean $\pm$1 stddev across 10 epochs. Lower values are better.}
\label{tab:storage-perf}
\begin{tabular}{l c c c}
\toprule
& pretrain & optimize (train+val) & E2E training \\ \midrule
Box
    & 0.03 $\pm$ 0.01
    & 4.29 $\pm$ 0.07
    & 6.34 $\pm$ 0.32 \\
Ellipse
    & 0.03 $\pm$ 0.01
    & 7.79 $\pm$ 0.06
    & 7.12 $\pm$ 1.28 \\
PICNN
    & 4.20 $\pm$ 0.20
    & 50.49 $\pm$ 0.19
    & 33.24 $\pm$ 0.25 \\
\bottomrule
\end{tabular}
\end{table}

\subsection{Experimental results: Portfolio optimization \label{sec:appendix:portfolio}}

\Cref{tab:portfolio_task_syn,tab:portfolio_coverage_syn} show the task loss and coverage results for the portfolio optimization problem. We again find that our E2E approach generally improves upon the ETO baselines at all uncertainty levels $\alpha$, with the exception of box uncertainty, where all the methods achieve similar performance. Our PICNN-based uncertainty representation, when learned end-to-end, performs better than box uncertainty and comparably with ellipse uncertainty. The similarity in performance between E2E ellipsoidal uncertainty and E2E PICNN uncertainty is likely due to the underlying aleatoric uncertainty (i.e., the uncertainty in $\Pcal(y \mid x)$) generally taking an ellipsoidal shape---the conditional distribution $\Pcal(y \mid x)$ is a Gaussian mixture model, and it tends to have a dominant mode (e.g., see \Cref{fig:portfolio_syn_contours}). In terms of coverage, we find that all the models and training methodologies obtain coverage very close to the target level, confirming that the improvements in task loss performance from our E2E approach do not come at the cost of worse coverage.

Because the conditional distribution $\Pcal(y \mid x)$ for the portfolio optimization problem is 2-dimensional, we can visualize the conditional distribution as well as the uncertainty sets estimated by our models. \Cref{fig:portfolio_syn_contours} plots the conditional density for input $x = \begin{bmatrix}-1.167 & 0.024\end{bmatrix}^\top$, along with the $\alpha = 0.1$ uncertainty sets $\Omega_\theta(x)$ and the resulting decision vectors $z_\theta^\star(x)$ for each uncertainty set parametrization. Uncertainty sets and decision vectors from both \ETO{} and E2E models are shown in different colors. The key takeaway from this figure is that smaller uncertainty sets (which is what ETO training tends to produce) do not always result in lower task loss. Furthermore, the more flexible parametrization of the PICNN allows it to learn uncertainty set shapes that may be more amenable to the downstream robust decision task than box or ellipsoidal uncertainty, even if the resulting uncertainty set has a larger or odder shape.

\begin{table}[t]
\caption{Task loss performance (mean $\pm$ 1 stddev across 10 runs) for the portfolio optimization problem. Lower values are better, and the best performance for each uncertainty-level $\alpha$ is highlighted. The results show that our E2E methods consistently outperform the ETO baselines.}
\label{tab:portfolio_task_syn}
\centering
\begin{tabular}{ll cccc}
\toprule
                         &         & \multicolumn{4}{c}{uncertainty level $\alpha$} \\
                         &         & 0.01 & 0.05 & 0.1 & 0.2 \\
\midrule
\texttt{ETO} & Box     & -1.16 $\pm$ 0.42 & -1.37 $\pm$ 0.12 & -1.39 $\pm$ 0.13 & -1.41 $\pm$ 0.12 \\
\texttt{ETO} & Ellipse & -1.09 $\pm$ 0.12 & -1.24 $\pm$ 0.11 & -1.29 $\pm$ 0.10 & -1.33 $\pm$ 0.10 \\
\texttt{ETO} & PICNN   & -0.95 $\pm$ 0.24 & -1.11 $\pm$ 0.24 & -1.20 $\pm$ 0.22 & -1.31 $\pm$ 0.16 \\
\texttt{ETO-SLL} & Box     & -1.41 $\pm$ 0.13 & -1.42 $\pm$ 0.12 & -1.42 $\pm$ 0.12 & -1.44 $\pm$ 0.11 \\
\texttt{ETO-SLL} & Ellipse & -1.12 $\pm$ 0.22 & -1.37 $\pm$ 0.12 & -1.40 $\pm$ 0.12 & -1.43 $\pm$ 0.12 \\
\texttt{ETO-JC}  & Ellipse & -1.16 $\pm$ 0.17 & -1.40 $\pm$ 0.11 & -1.42 $\pm$ 0.11 & -1.44 $\pm$ 0.11 \\
\midrule
E2E & Box     & -1.21 $\pm$ 0.44 & -1.40 $\pm$ 0.14 & -1.43 $\pm$ 0.11 & -1.43 $\pm$ 0.10 \\
E2E & Ellipse & \textbf{-1.48} $\pm$ 0.12 & -1.47 $\pm$ 0.11 & \textbf{-1.48} $\pm$ 0.11 & \textbf{-1.47} $\pm$ 0.11 \\
E2E & PICNN   & -1.45 $\pm$ 0.14 & \textbf{-1.48} $\pm$ 0.10 & \textbf{-1.48} $\pm$ 0.10 & \textbf{-1.47} $\pm$ 0.11 \\
\bottomrule
\end{tabular}
\end{table}

\begin{table}[t]
\caption{Coverage (mean $\pm$1 stddev across 10 runs) for the portfolio optimization problem. Our E2E models achieve similar coverage to the ETO baselines, confirming that the lower task loss of our E2E models does not come at the expense of worse coverage.}
\label{tab:portfolio_coverage_syn}
\centering
\begin{tabular}{ll cccc}
\toprule
                         &         & \multicolumn{4}{c}{uncertainty level $\alpha$} \\
                         &         & 0.01 & 0.05 & 0.1 & 0.2 \\
\midrule
\texttt{ETO}     & Box     & .984 $\pm$ .007 & .947 $\pm$ .017 & .902 $\pm$ .017 & .786 $\pm$ .020 \\
\texttt{ETO}     & Ellipse & .988 $\pm$ .004 & .944 $\pm$ .020 & .894 $\pm$ .022 & .794 $\pm$ .027 \\
\texttt{ETO}     & PICNN   & .989 $\pm$ .006 & .949 $\pm$ .014 & .901 $\pm$ .019 & .801 $\pm$ .034 \\
\texttt{ETO-SLL} & Box     & .985 $\pm$ .012 & .945 $\pm$ .021 & .885 $\pm$ .030 & .796 $\pm$ .029 \\
\texttt{ETO-SLL} & Ellipse & .989 $\pm$ .011 & .945 $\pm$ .024 & .885 $\pm$ .039 & .795 $\pm$ .030 \\
\texttt{ETO-JC}  & Ellipse & .991 $\pm$ .006 & .953 $\pm$ .017 & .902 $\pm$ .026 & .796 $\pm$ .026 \\
\midrule
E2E              & Box     & .989 $\pm$ .006 & .949 $\pm$ .012 & .903 $\pm$ .016 & .785 $\pm$ .019 \\
E2E              & Ellipse & .992 $\pm$ .006 & .954 $\pm$ .010 & .903 $\pm$ .022 & .798 $\pm$ .022 \\
E2E              & PICNN   & .993 $\pm$ .002 & .953 $\pm$ .010 & .912 $\pm$ .017 & .798 $\pm$ .024 \\
\bottomrule
\end{tabular}
\end{table}

\begin{figure}[t]
\centering
\includegraphics[width=0.9\textwidth]{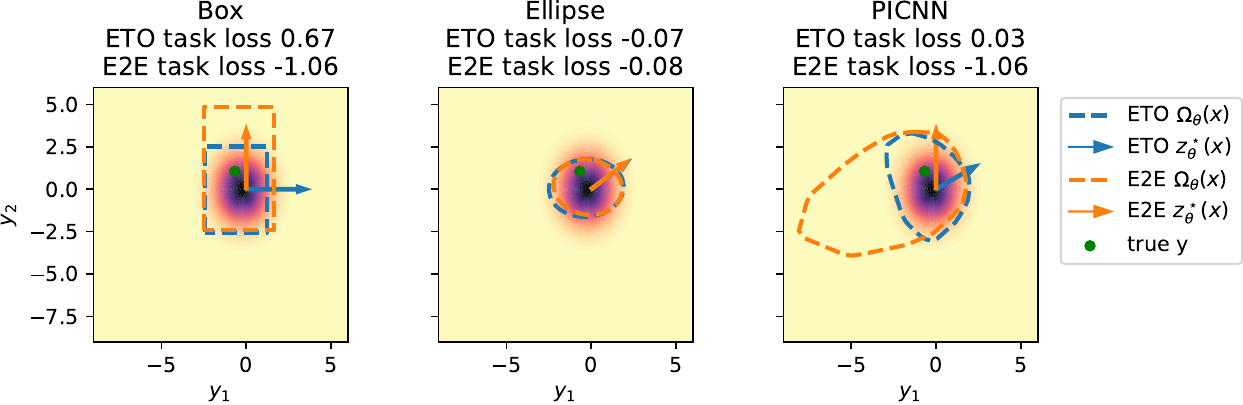}%
\caption{This figure plots the density of the conditional distribution $\Pcal(y \mid x)$ for $x = \begin{bmatrix}-1.167 & 0.024\end{bmatrix}^\top$ from the portfolio optimization problem, with darker colors indicating higher density. Also plotted are the $\alpha = 0.1$ uncertainty sets $\Omega_\theta(x)$ (dashed lines) and the resulting decision vectors $z_\theta^\star(x)$ (arrows) for each uncertainty set parametrization. Results for \ETO{} models are shown in blue, whereas results for E2E are shown in orange. The ``true'' $y$ sampled from $\Pcal(y \mid x)$ is drawn in green, and the task loss for this example is computed using this $y$. The decision vectors have been artificially scaled larger to be easier to see.}
\label{fig:portfolio_syn_contours}
\end{figure}
\section{Appendix: Related Work}
\label{sec:appendix_related_work}

\paragraph{``Task-based'' or ``decision-focused'' learning.}
The notion of ``task-based'' end-to-end model learning was introduced by \citet{donti_task-based_2017}, which proposed to train machine learning models in an end-to-end fashion to minimize a downstream stochastic optimization objective. To achieve this, the authors backpropagate gradients through a stochastic optimization problem, which is made possible for various types of convex optimization problems via the implicit function theorem \citep{donti_task-based_2017,amos_optnet_2017,agrawal_differentiable_2019}. However, \citet{donti_task-based_2017} does not train the model to estimate uncertainty and thereby does not provide any explicit guarantees on robustness on their decisions to uncertainty. Our framework improves upon this baseline by yielding calibrated uncertainty sets which can then be used to obtain robust decisions.

Another line of related works relies on surrogate loss functions to approximate the gradient of the downstream task loss, typically in settings where the exact gradient does not exist or is otherwise hard to compute. For example, the ``Smart Predict, then Optimize'' approach \citep{elmachtoub_smart_2022} specifically considers differentiating the solution of linear programs, whereas \citet{wilder_melding_2019} differentiates the solution of discrete (combinatorial) optimization problems. As with \citet{donti_task-based_2017}, though, these approaches focus only on average task loss without any estimation of model uncertainty, thereby lacking explicit guarantees on the robustness of the resulting decisions. A recent survey by \citet{mandi_decision-focused_2024} provides a comprehensive review of existing decision-focused learning approaches.

\paragraph{Uncertainty Quantification.}
Various designs for deep learning regression models that provide uncertainty estimates have been proposed in the literature, including Bayesian neural networks \citep{blundell_weight_2015,gal_dropout_2016}, Gaussian process regression and deep kernel learning \citep{rasmussen_gaussian_2005,wilson_deep_2016,liu_simple_2020}, ensembles of models \citep{lakshminarayanan_simple_2017}, and quantile regression \citep{romano_conformalized_2019}, among other techniques. These methods typically only provide heuristic uncertainty estimates that are not necessarily well-calibrated \citep{nado_uncertainty_2022}.

Post-hoc methods such as isotonic regression \citep{kuleshov_accurate_2018} or conformal prediction \citep{shafer_tutorial_2008} may be used to calibrate the uncertainty outputs of deep learning models. These calibration methods generally treat the model as a black box and scale predicted uncertainty levels so that they are calibrated on a held-out calibration set. Isotonic regression guarantees calibrated outputs in the limit of infinite data, whereas conformal methods provide probabilistic, finite-sample calibration guarantees when the calibration set is exchangeable (e.g., drawn i.i.d. from the same distribution) with test data. These calibration methods are generally not included in the model training procedure because they involve non-differentiable operators, such as sorting. However, recent works have proposed differentiable losses \citep{einbinder_training_2022,stutz_learning_2022} that approximate the conformal prediction procedure during training and thus allow end-to-end training of models to output more calibrated uncertainty. As approximations, these methods lose the marginal coverage guarantees that true conformal methods provide. However, such guarantees can be recovered at test time by replacing the approximations with true conformal prediction.

\paragraph{Robust and stochastic optimization.}

The optimization community has proposed a number of techniques over the years to improve robust decision-making under uncertainty, including stochastic, risk-sensitive, chance-constrained, distributionally robust, and robust optimization (e.g., \citet{ben-tal_robust_2009,shapiro_lectures_2009,nemirovski_convex_2007,rahimian_distributionally_2019}). These techniques have been applied to a wide range of applications, including energy systems operation 
\citep{zheng_stochastic_2015,ndrio_pricing_2021,dvorkin_chance-constrained_2020,zhong_chance_2021,poolla_wasserstein_2021,warrington_robust_2012,bertsimas_adaptive_2013,christianson_dispatch-aware_2022} and portfolio optimization \citep{gregory_robust_2011,quaranta_robust_2008,bertsimas_data-driven_2018}. In these works, the robust and stochastic optimization methods enable selecting decisions (grid resource dispatches or portfolio allocations) in a manner that is aware of uncertainty, e.g., so an energy system operator can ensure that sufficient generation is available to meet demand even on a cloudy day without much solar generation. Typically, however, the construction of uncertainty sets, estimated probability distributions over uncertain parameters, or ambiguity sets over distributions takes place offline and is unconnected to the eventual decision-making task. Thus, our proposed end-to-end approach allows for simultaneous calibration of uncertainty sets with optimal decision-making.
\section{Appendix: Maximizing over the uncertainty set}
\label{sec:appendix_max}

We consider robust optimization problems of the form
\[
    \min_{z \in \R^p} \max_{\hat{y} \in \R^n} \ \hat{y}^\top F z + \tilde{f}(x,z)
    \qquad\text{s.t.}\qquad
    \hat{y} \in \Omega(x),\quad
    g(x,z) \leq 0.
\]
For fixed $z$, the inner maximization problem is
\[
    \max_{\hat{y} \in \R^n} \ \hat{y}^\top F z
    \qquad\text{s.t.}\qquad
    \hat{y} \in \Omega(x),
\]
which we analyze in the more abstract form
\[
    \max_{y \in \R^n} c^\top y
    \qquad\text{s.t.}\qquad
    y \in \Omega
\]
for arbitrary $c \in \R^n \setminus \{0\}$. The subsections of this appendix derive the dual form of this maximization problem for specific representations of the uncertainty set $\Omega$.

Suppose $y$ is standardized or whitened by an affine transformation with $\mu \in \R^n$ and invertible matrix $W \in \R^{n \times n}$
\[
    y_\text{transformed} = W^{-1} (y - \mu)
\]
so that $\Omega$ is an uncertainty set on the transformed $y_\text{transformed}$. Then, the original primal objective can be recovered as
\[
    c^\top y
    = c^\top (W y_\text{transformed} + \mu)
    = (W c)^\top y_\text{transformed} + c^\top \mu.
\]
In our experiments, we use element-wise standardization of $y$ by setting $W = \diag(y_\text{std})$, where $y_\text{std} \in \R^n$ is the element-wise standard-deviation of $y$.

\subsection{Maximizing over a box constraint}
\label{sec:appendix_box}
Let $[\underline{y}, \overline{y}] \subset \R^n$ be a box uncertainty set for $y \in \R^n$. Then, for any vector $c \in \R^n$, the primal linear program
\[
    \max_{y \in \R^n} \ c^\top y
    \qquad\text{s.t.}\qquad
    \underline{y} \leq y \leq \overline{y}
\]
has dual problem
\[
    \min_{\nu \in \R^{2n}} \ \begin{bmatrix}\overline{y}^\top &  -\underline{y}^\top\end{bmatrix} \nu
    \qquad\text{s.t.}\qquad
    \begin{bmatrix}I_n & -I_n\end{bmatrix} \nu = c,\quad
    \nu \geq \zero,
\]
which can also be equivalently written as
\[
    \min_{\nu \in \R^n} \ (\overline{y} - \underline{y})^\top \nu + \underline{y}^\top c
    \qquad\text{s.t.}\qquad
    \nu \geq \zero,\quad
    \nu - c \geq \zero.
\]
Since strong duality always holds for linear programs, the optimal values of the primal and dual problems will be equal so long as one of the problems is feasible, e.g., so long as the box $[\underline{y}, \overline{y}]$ is nonempty. We can thus incorporate this dual problem into the outer minimization of \eqref{eq:z-star} to yield the non-robust form \eqref{eq:z-star-box-nonrobust}.

\subsection{Maximizing over an ellipsoid}
\label{sec:appendix_ellipsoid}

For any $c \in \R^n\setminus\{0\}$, $\Sigma \in \Sym_{++}^n$, and $q > 0$, the primal quadratically constrained linear program (QCLP)
\[
    \max_{y \in \R^n} \ c^\top y
    \qquad\text{s.t.}\qquad
    (y-\mu)^\top \Sigma^{-1} (y-\mu) \leq q
\]
has dual problem
\[
    \min_{\nu \in \R} \ \frac{1}{4 \nu} c^\top \Sigma c + \mu^\top c + \nu q
    \qquad
    \text{s.t.}\quad
    \nu \geq 0.
\]
By Slater's condition, strong duality holds by virtue of the assumption that $q > 0$ (which implies strict feasibility of the primal problem), and thus the primal and dual problems have the same optimal value. Moreover, since $\Sigma$ is positive definite and $q > 0$, this problem has a unique optimal solution at $\nu^\star = \frac{1}{2 \sqrt{q}} \|L^\top c\|_2$, where $L$ is the unique lower-triangular Cholesky factor of $\Sigma$ (i.e., $\Sigma = L L^\top$). Substituting $\nu^\star$ into the dual problem yields
\[
    \sqrt{q} \norm{L^\top c}_2 + \mu^\top c.
\]
Plugging this into \eqref{eq:z-star} yields the non-robust form \eqref{eq:z-star-ellipsoid-nonrobust}.

We write the dual objective in terms of the Cholesky factor $L$ because our predictive models for ellipsoidal uncertainty directly output the entries of $L$ (see \Cref{sec:appendix:experiment-details}). Note, however, that the dual problem solution can be equivalently written in terms of the square-root of $\Sigma$, because
\[
    \norm{L^\top c}_2^2
    = c^\top L L^\top c
    = c^\top \Sigma c
    = c^\top \Sigma^{1/2} \Sigma^{1/2} c
    = \norm{\Sigma^{1/2} c}_2^2.
\]

\subsection{Proof of Theorem~\ref{thm:picnn-tractable}: Maximizing over the sublevel set of a PICNN}
\label{sec:appendix_picnn}

Let $s_\theta: \R^m \times \R^n \to \R$ be a partially input-convex neural network (PICNN) with ReLU activations as described in \eqref{eq:picnn}, so that $s_\theta(x,y)$ is convex in $y$. Suppose that all the hidden layers have the same dimension $d$ (i.e., $\forall l = 0, \dotsc, L-1$: $W_l \in \R^{d \times d}$, $V_l \in \R^{d \times n}$, $b_l \in \R^d$), and the final layer $L$ has $W_L \in \R^{1 \times d}$, $V_L \in \R^{1 \times n}$, $b_L \in \R$. Let $c \in \R^n$ be any vector. Then, the optimization problem
\begin{equation} \label{expr:picnn_max}
    \max_{y \in \R^n} \ c^\top y
    \qquad\text{s.t.}\qquad
    s_\theta(x,y) \leq q
\end{equation}
can be equivalently written as
\begin{subequations} \label{expr:relaxed_picnn_max}
\begin{align}
    \max_{y \in \R^n,\ \sigma_1, \dotsc, \sigma_L \in \R^d} \quad& c^\top y \\
    \text{s.t.}\quad
    & \sigma_l \geq \zero_d
        & \forall l = 1, \dotsc, L \label{ineq:picnn_max_0_constr}\\
    & \sigma_{l+1} \geq W_l \sigma_l + V_l y + b_l
        & \forall l = 0, \dotsc, L-1 \label{ineq:picnn_max_linear_constr}\\
    & W_L \sigma_L + V_L y + b_L \leq q. \label{ineq:picnn_max_q_constr}
\end{align}
\end{subequations}
To see that this is the case, first note that \eqref{expr:relaxed_picnn_max} is a relaxed form of \eqref{expr:picnn_max}, obtained by replacing the equalities $\sigma_{l+1} = \ReLU\left(W_l \sigma_l + V_l y + b_l \right)$ in the definition of the PICNN \eqref{eq:picnn} with the two separate inequalities $\sigma_{l+1} \geq \zero_d$ and $\sigma_{l+1} \geq W_l \sigma_l + V_l y + b_l$ for each $l = 0, \ldots, L-1$. As such, the optimal value of \eqref{expr:relaxed_picnn_max} is no less than that of \eqref{expr:picnn_max}. However, given an optimal solution $y, \sigma_1, \ldots, \sigma_L$ to \eqref{expr:relaxed_picnn_max}, it is possible to obtain another feasible solution $y, \hat{\sigma}_1, \ldots, \hat{\sigma}_L$ with the same optimal objective value by iteratively decreasing each component of $\sigma_l$ until one of the two inequality constraints \eqref{ineq:picnn_max_0_constr}, \eqref{ineq:picnn_max_linear_constr} is tight, beginning at $l = 1$ and incrementing $l$ once all entries of $\sigma_l$ cannot be decreased further. This procedure of decreasing the entries in each $\sigma_l$ will maintain problem feasibility, since the weight matrices $W_l$ are all assumed to be entrywise nonnegative in the PICNN construction; in particular, this procedure will not increase the left-hand side of \eqref{ineq:picnn_max_q_constr}. Moreover, since one of the two constraints \eqref{ineq:picnn_max_0_constr}, \eqref{ineq:picnn_max_linear_constr} will hold for each entry of each $\hat{\sigma}_l$, this immediately implies that $y$ is feasible for the unrelaxed problem \eqref{expr:picnn_max}, and so \eqref{expr:picnn_max} and \eqref{expr:relaxed_picnn_max} must have the same optimal value. 

Having shown that we may replace the convex program \eqref{expr:picnn_max} with a linear equivalent \eqref{expr:relaxed_picnn_max}, we can write this latter problem in the matrix form
\[
    \max_{y \in \R^n,\ \sigma_1, \dotsc, \sigma_L \in \R^d} \ c^\top y
    \qquad\text{s.t.}\qquad
    A \begin{bmatrix}y \\ \sigma_1 \\ \vdots \\ \sigma_L\end{bmatrix} \leq b
\]
where
\begin{equation}\label{eq:picnn_sublevel_matrices}
    A = \begin{bmatrix}
        & -I_d \\
        & & \ddots \\
        & & & -I_d \\
        V_0 & -I_d \\
        \vdots & W_1 & \ddots \\
        \vdots & & \ddots & -I_d \\
        V_L & & & W_L
    \end{bmatrix}
    \in \R^{(2Ld + 1) \times (n + Ld)},
    \qquad
    b = \begin{bmatrix}
        \zero_d \\ \vdots \\ \zero_d \\
        -b_0 \\ \vdots \\ -b_{L-1} \\ q - b_L
    \end{bmatrix}
    \in \R^{2Ld + 1}.
\end{equation}
By strong duality, if this linear program has an optimal solution, its optimal value is equal to the optimal value of its dual problem:
\begin{equation} \label{expr:picnn_dual}
    \min_{\nu \in \R^{2Ld + 1}} b^\top \nu
    \qquad\text{s.t.}\qquad
    A^\top \nu = \begin{bmatrix} c \\ \zero_{Ld} \end{bmatrix}, \quad
    \nu \geq 0.
\end{equation}
We can incorporate this dual problem \eqref{expr:picnn_dual} into the outer minimization of \eqref{eq:z-star} to yield the non-robust form \eqref{eq:z-star-picnn-nonrobust}. For a more interpretable form of this dual problem, let $\nu^{(i)}$ denote the portion of the dual vector $\nu$ corresponding to the $i$-th block-row of matrix $A$, indexed from $0$. That is, $\nu^{(i)} = \nu_{id+1:(i+1)d}$ for $i=0,\dotsc,2L-1$. Furthermore, let $\mu = \nu_{2Ld+1}$ be the last entry of $\nu$. Written out, the dual problem \eqref{expr:picnn_dual} becomes
\begin{align*}
    \min_{\nu^{(0)},\dotsc,\nu^{(2L-1)} \in \R^d,\, \mu \in \R} \quad
    & \mu (q - b_L) - \sum_{l=0}^{L} b_l^\top \nu^{(L+l)} \\
    \text{s.t.}\quad
    & \begin{bmatrix}
        V_0^\top & \dotsb & V_L^\top
    \end{bmatrix} \nu_{Ld+1:} = c \\
    & W_{l+1}^\top \nu^{(L+l+1)} - \nu^{(L+l)} - \nu^{(l)} = \zero_d
        & \forall l = 0, \dotsc, L-1 \\
    & \nu \geq 0.
\end{align*}

\subsection{Ensuring feasibility of the PICNN maximization problem}
\label{sec:appendix_picnn_modifications}

As noted at the end of \Cref{sec:picnn}, it may sometimes be the case that the inner maximization problem of \eqref{eq:z-star} is unbounded or infeasible when $\Omega_\theta(x)$ is parametrized by a PICNN, since in general, the sublevel sets of the PICNN might be unbounded, or the $q$ selected by the split conformal procedure detailed in \Cref{sec:calibration} may be sufficiently small that $\Omega_\theta(x) = \{\hat{y} \in \R^n \mid s_\theta(x, \hat{y}) \leq q\}$ is empty for certain inputs $x$. We can address each of these concerns using separate techniques. 

\paragraph{Ensuring compact sublevel sets.}
To ensure that the PICNN-parametrized score function $s_\theta(x, y)$ has compact sublevel sets in $y$, we can redefine the output layer by setting $V_L = \zero_{1 \times n}$ and adding a small $\ell^\infty$ norm term penalizing growth in $y$:
\begin{equation} \label{eq:picnn_modified}
    s_\theta(x, y) = W_L\sigma_L + \epsilon \|y\|_{\infty} + b_L,
\end{equation}
where $\epsilon \geq 0$ is a small penalty term, and where all the remaining parameters and layers remain identical to their definition in \eqref{eq:picnn}. This modification ensures that, for any fixed $x$, $s_\theta(x, y)$ has compact sublevel sets, since $\sigma_L \geq 0$ by construction \eqref{eq:picnn} and the penalty term $\epsilon \|y\|_{\infty}$ will grow unboundedly large as $y$ goes to infinity in any direction, so long as $\epsilon > 0$. Moreover, so long as $\epsilon$ is chosen sufficiently small and the PICNN is sufficiently deep, this modification should not negatively impact the ability of the PICNN to represent general compact convex uncertainty sets.

Using this modified PICNN \eqref{eq:picnn_modified}, the maximization problem \eqref{expr:picnn_max} can be written as an equivalent linear program
\begin{subequations} \label{expr:relaxed_modified_picnn_max}
\begin{align}
    \max_{\substack{y \in \R^n,\ \sigma_1, \dotsc, \sigma_L \in \R^d,\\ \ \kappa \in \R}} \quad& c^\top y \\
    \text{s.t.}\quad
    & \sigma_l \geq \zero_d
        & \forall l = 1, \dotsc, L \\
    & \sigma_{l+1} \geq W_l \sigma_l + V_l y + b_l
        & \forall l = 0, \dotsc, L-1 \\
    & \kappa \geq y_i,\, \kappa \geq - y_i & \forall i = 1, \ldots, n \\
    & W_L \sigma_L + \epsilon \kappa + b_L \leq q
\end{align}
\end{subequations}
where the equivalence between \eqref{expr:picnn_max} and \eqref{expr:relaxed_modified_picnn_max} follows the same argument as that employed in the previous section when showing the equivalence of \eqref{expr:picnn_max} and \eqref{expr:relaxed_picnn_max}. We can thus likewise apply strong duality to obtain an equivalent minimization form of the problem \eqref{expr:relaxed_modified_picnn_max} and incorporate this into the outer minimization of \eqref{eq:z-star} to yield a non-robust problem of the general form \eqref{eq:z-star-picnn-nonrobust}.

\paragraph{Ensuring $\Omega_\theta(x)$ is nonempty.}
If the $q$ chosen by the split conformal procedure is too small such that $\Omega_\theta(x)$ is empty, i.e., $q \leq q_{\min}$ where
\begin{equation}\label{eq:q-min}
    q_{\min} := \min_{\hat{y} \in \R^n} s_\theta(x,\hat{y}),
\end{equation}
then we simply increase $q$ to $q_{\min}$ so that $\Omega_\theta(x)$ is guaranteed to be nonempty. That is, for input $x$, we set
\[
    q = \max\left(\min_{\hat{y} \in \R^n} s_\theta(x,\hat{y}),\ \Call{Quantile}{\{s_\theta(x_i,y_i)\}_{(x_i,y_i) \in \Dc},\ 1-\alpha} \right).
\]
This preserves the marginal coverage guarantee, as increasing $q$ can only result in a larger uncertainty set $\Omega_\theta(x)$.

In theory, $q_{\min}$ varies as a function of $\theta$, and it is possible to differentiate through the optimization problem \eqref{eq:q-min} using the methods from \citet{agrawal_differentiable_2019} since the problem is convex and $s_\theta$ is assumed to be differentiable w.r.t. $\theta$ almost everywhere. However, to avoid this added complexity, in practice, we treat $q_{\min}$ as a constant. In other words, on inputs $x$ where we have to increase $q$ to $q_{\min}$, we treat $\pderiv{q}{\theta} = 0$.

\section{Appendix: Exact differentiable conformal prediction}
\label{sec:appendix_diff_conformal}

In this section, we prove how to exactly differentiate through the conformal prediction procedure, unlike the approximate derivative first introduced in \cite{stutz_learning_2022}.

\begin{theorem}
Let $\alpha \in (0,1)$ be a risk level, and let $s_i := s_\theta(x_i, y_i)$ denote the scores computed by a score function $s_\theta: \R^m \times \R^n \to \R$ over data points $\{(x_i, y_i)\}_{i=1}^M$. Suppose $s_\theta(x_i,y_i)$ is differentiable w.r.t. $\theta$ for all $i = 1,\dotsc,M$.

Define $s_{M+1} := \infty$. Let $\sigma: \{1, \dotsc, M+1\} \to \{1, \dotsc, M+1\}$ denote the permutation that sorts the scores in ascending order, such that $s_{\sigma(i)} \leq s_{\sigma(j)}$ for all $i < j$. For simplicity of notation, we may write $s_{(i)} := s_{\sigma(i)}$.

Let $q = \Call{Quantile}{\{s_i\}_{i=1}^M,\ 1-\alpha}$ where the \Call{Quantile}{} function is as defined in \Cref{alg:dauq}. That is, $q = s_{(k)}$, where $k := \ceil{(M+1)(1-\alpha)} \in \{1,\dotsc,M,M+1\}$. If $s_{(k)}$ is unique, then
\[
    \deriv{q}{\theta} = \begin{cases}
        \deriv{}{\theta} s_\theta(x_{\sigma(k)}, y_{\sigma(k)}), & \text{if } \alpha \geq \frac{1}{M+1} \\
        0, & \text{otherwise}.
    \end{cases}
\]
\end{theorem}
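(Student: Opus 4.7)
The plan is to split the argument into two cases according to whether $\alpha < 1/(M+1)$ or $\alpha \geq 1/(M+1)$, since the sorted index $k = \ceil{(M+1)(1-\alpha)}$ behaves qualitatively differently in these two regimes. The crux of the argument is that the ``sorting'' step, while combinatorial in nature, becomes locally trivial once we assume that $s_{(k)}$ is unique, so the derivative reduces to an ordinary chain rule application on a single score.

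For the trivial case $\alpha < 1/(M+1)$, I would first show that $(M+1)(1-\alpha) > M$, which forces $k = M+1$ (since $k \leq M+1$ always). Then $q = s_{(M+1)} = s_{M+1} = \infty$ by construction, and since this augmented value is a fixed constant independent of $\theta$, we immediately have $\deriv{q}{\theta} = 0$.

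For the substantive case $\alpha \geq 1/(M+1)$, the bound $k \leq M$ holds, so $q = s_{\sigma(k)}$ for some index $\sigma(k) \in \{1,\dotsc,M\}$ corresponding to a finite, differentiable score. The key step is to show that in a neighborhood of $\theta$, the identity of the index $\sigma(k)$ does not change. By the uniqueness hypothesis, we have strict separation $s_{(k-1)} < s_{(k)} < s_{(k+1)}$ (with the left inequality omitted if $k = 1$). Since $s_\theta(x_i,y_i)$ is differentiable and hence continuous in $\theta$ for each $i \leq M$, and since $s_{M+1} = \infty$ dominates trivially, there exists an open neighborhood $U$ of $\theta$ on which these strict inequalities are preserved. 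On $U$, the score that occupies the $k$-th sorted position is exactly the same data index $\sigma(k)$ as at $\theta$; that is, $\sigma(k)$ is locally constant, even if the full permutation $\sigma$ need not be. Therefore $q(\theta') = s_{\theta'}(x_{\sigma(k)}, y_{\sigma(k)})$ for all $\theta' \in U$, and the chain rule yields $\deriv{q}{\theta} = \deriv{}{\theta} s_\theta(x_{\sigma(k)}, y_{\sigma(k)})$.

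The main technical step to handle carefully is the local-constancy argument for $\sigma(k)$, since the full sorting map has nondifferentiable ``tie'' loci and in principle the permutation can change abruptly. The insight that saves us is that we do not need the full permutation to be stable, only the single index $\sigma(k)$; and the two strict inequalities $s_{(k-1)} < s_{(k)}$ and $s_{(k)} < s_{(k+1)}$ are precisely what is needed to ensure this via continuity. Everything else is straightforward arithmetic and routine calculus, so no additional obstacles are expected.
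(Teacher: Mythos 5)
Your proof is correct and follows essentially the same route as the paper: a case split on whether $\alpha \geq \frac{1}{M+1}$ (so that $k \leq M$ and $q$ is a finite score) versus $q = \infty$ being constant, followed by reducing $\deriv{q}{\theta}$ to the derivative of the single score $s_{\sigma(k)}$ via uniqueness. Your local-constancy argument for the index $\sigma(k)$ is in fact a more careful justification of the step the paper states tersely as $\deriv{q}{s_{(i)}} = \one[i=k]$, but it is the same underlying idea.
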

\begin{proof}
First, when $\alpha \in (0, \frac{1}{M+1})$, we have $k = M+1$, so $q = \infty$ is constant regardless of the choice of $\theta$. Thus, $\deriv{q}{\theta} = 0$.

Now, suppose $\alpha \geq \frac{1}{M+1}$. The \Call{Quantile}{} function returns the $k$-th largest value of $\{s_i\}_{i=1}^M \cup \{\infty\}$. Since we assume $s_{(k)}$ is unique, we have $\deriv{q}{s_{(i)}} = \one[i=k]$. Finally, we have
\[
    \deriv{q}{\theta}
    = \sum_{i=1}^M \deriv{q}{s_i} \deriv{s_i}{\theta}
    = \sum_{i=1}^M \deriv{q}{s_{(i)}} \deriv{s_{(i)}}{\theta}
    = \deriv{s_{(k)}}{\theta}
    = \deriv{}{\theta} s_\theta(x_{\sigma(k)}, y_{\sigma(k)}).
\]
\end{proof}

The two key assumptions in this theorem are that (1) $s_\theta$ is differentiable w.r.t. $\theta$, and (2) $s_{(k)}$ is unique. When $s_\theta$ is a neural network with a common activation function (e.g., ReLU), (1) holds for inputs $(x,y) \in \R^m \times \R^n$ almost everywhere and $\theta$ almost everywhere. Regarding (2), in practice, just as the gradient of the $\max$ function is typically implemented without checking whether its inputs have ties, we do not check whether $s_{(k)}$ is unique.
\section{Appendix: Experiment details}
\label{sec:appendix:experiment-details}

Our experiments were conducted across a variety of machines, including private servers and Amazon AWS EC2 instances, ranging from 12-core to 128-core machines. Our ETO experiments benefited from GPU acceleration across a combination of NVIDIA GeForce GTX 1080 Ti, Titan RTX, T4, and A100 GPUs. Our E2E experiments did not use GPU acceleration, due to the lack of GPU support in the \texttt{cvxpylayers} Python package \citep{agrawal_differentiable_2019}.

In all experiments, we use a batch size of 256 and the Adam optimizer \citep{Kingma2015}. Models were trained for up to 100 epochs with early stopping if there was no improvement in validation loss for 10 consecutive epochs.

For box and ellipsoid \ETO\ baseline models, we performed a hyperparameter grid search over learning rates ($10^{-4.5}$, $10^{-4}$, $10^{-3.5}$, $10^{-3}$, $10^{-2.5}$, $10^{-2}$, $10^{-1.5}$) and L2 weight decay values (0, $10^{-4}$, $10^{-3}$, $10^{-2}$). For PICNN \ETO\ models we performed a hyperparameter grid search over learning rates ($10^{-4}$, $10^{-3}$, $10^{-2}$) and L2 weight decay values ($10^{-4}$, $10^{-3}$, $10^{-2}$).

\subsection{Uncertainty representation}

\paragraph{Box uncertainty.}
Our box uncertainty model uses a neural network $h_\theta$ with 3 hidden layers of 256 units each and ReLU activations with batch-normalization. The output layer has dimension $2n$, where dimensions $1:n$ predict the lower bound. Output dimensions $n+1:2n$, after passing through a softplus to ensure positivity, represents the difference between the upper and lower bounds. That is,
\[
    \begin{bmatrix}
        h_\theta^\text{lo}(x) \\
        h_\theta^\text{hi}(x)
    \end{bmatrix}
    = \begin{bmatrix}
        h_\theta(x)_{1:n} \\
        h_\theta(x)_{1:n} + \softplus(h_\theta(x)_{n+1:2n})
    \end{bmatrix}.
\]
This architecture ensures that $h_\theta^\text{hi}(x) > h_\theta^\text{lo}(x)$.

In the two-stage \ETO\ baseline, we first train $h_\theta$ to estimate the $\alpha/2$- and $(1-\alpha/2)$-quantiles, so that $[h_\theta^\text{lo}(x), h_\theta^\text{hi}(x)]$ represents the centered $(1-\alpha)$-confidence region. Quantile regression is a common method for generating uncertainty sets for scalar predictions by estimating quantiles of the conditional distribution $\Pcal(y \mid x)$ \citep{romano_conformalized_2019}. For scalar true label $y$, quantile regression models are commonly trained to minimize \emph{pinball loss} (a.k.a. \emph{quantile loss}) where $\beta$ is the quantile level being estimated:
\[
    \pinball_\beta(\hat{y}, y) = \begin{cases}
        \beta \cdot (y - \hat{y}),
            & \text{if } y > \hat{y} \\
        (1-\beta) \cdot (\hat{y} - y),
            & \text{if } y \leq \hat{y}.
    \end{cases}
\]
To generalize the pinball loss to our setting of multi-dimensional $y \in \R^n$, we sum the pinball loss across the dimensions of $y$: 
$
    \pinball_\beta(\hat{y}, y)
    = \sum_{i=1}^n \pinball_\beta(\hat{y}_i, y_i)
$.

Our end-to-end (E2E) box uncertainty models use the same architecture as above, initialized with weights from the trained \ETO\ model. We found it helpful to use a weighted combination of the task loss and pinball loss during training of the E2E models to improve training stability. In our experiments, we used a weight of 0.9 on the task loss and 0.1 on the pinball loss. The E2E models used the best L2 weight decay from the \ETO\ models, and the learning rate was tuned across $10^{-2}$, $10^{-3}$, and $10^{-4}$.

\paragraph{Ellipsoidal uncertainty.}
Our ellipsoidal uncertainty model uses a neural network $h_\theta$ with 3 hidden layers of 256 units each and ReLU activations with batch-normalization. The output layer has dimension $n + n(n+1)/2$, where dimensions $1:n$ predict the mean $\mu_\theta(x)$ and the remaining output dimensions are used to construct a lower-triangular Cholesky factor $L_\theta(x)$ of the covariance matrix $\Sigma_\theta(x) = L_\theta(x) L_\theta(x)^\top$. We pass the diagonal entries of $L_\theta(x)$ through a softplus function to ensure strict positivity, which then ensures $\Sigma_\theta(x)$ is positive definite.

For the \ETO\ baseline, we trained the model using the negative log-likelihood (NLL) loss
\[
    \NLL(\theta)
    = \frac{1}{N} \sum_{(x,y) \in D} -\ln\Gaussian(y \mid \mu_\theta(x), \Sigma_\theta(x)),
\]
where $\Gaussian(\cdot \mid \mu, \Sigma)$ denotes the density of a multivariate normal distribution with mean $\mu$ and covariance matrix $\Sigma$.

Our end-to-end (E2E) ellipsoidal uncertainty models use the same architecture as above, initialized with weights from the the trained \ETO\ model. We found it helpful to use a weighted combination of the task loss and NLL loss during training of the E2E models to improve training stability. In our experiments, we used a weight of 0.9 on the task loss and 0.1 on the NLL loss. The E2E models used the best L2 weight decay from the \ETO\ models, and the learning rate was tuned across $10^{-2}$, $10^{-3}$, and $10^{-4}$.

\paragraph{PICNN uncertainty.}
Our PICNN has 2 hidden layers with ReLU activations.

For the battery storage problem, we used 64 units per hidden layer. We did not run into any feasibility issues for the PICNN maximization problem, so we did not restrict $V_L$ as described in \Cref{sec:appendix_picnn_modifications}, and we set $\epsilon = 0$.

For the portfolio optimization problem, we tried 32, 64, and 128 units per hidden layer, finding that 32 units worked best. We did run into feasibility issues for the PICNN maximization problem, which we resolved by setting $V_L = \zero_{1 \times n}$ as described in \Cref{sec:appendix_picnn_modifications}. This change alone was sufficient, and we set $\epsilon = 0$.

For the \ETO\ baseline, we take inspiration from the approach by \citet{lin_how_2023} to give probabilistic interpretation to a PICNN model $s_\theta$ via the energy-based model $\hat\Pcal_\theta(y \mid x) = \frac{1}{Z_\theta(x)} \exp(-s_\theta(x,y))$ where $Z_\theta(x) := \int_{\tilde{y} \in \R^n} \exp(-s_\theta(x, \tilde{y})) \diff\tilde{y}$ is the normalizing constant. We train our \ETO\ PICNN models with an approximation to the true NLL loss based on samples from the Metropolis-Adjusted Langevin Algorithm (MALA), a Markov Chain Monte Carlo (MCMC) method.
We refer readers to our code for the specific hyperparameters and implementation details we used.

Note that under this energy-based model, adding a scalar constant $c$ to the PICNN (i.e., $s_\theta(x,y) + c$) does not change the probability distribution. That is, $\exp(-s_\theta(x,y)) \propto \exp(-s_\theta(x,y) + c)$. To regularize the PICNN model, which has a bias term in its output layer, we therefore introduce a regularization loss of $w_\text{zero} \cdot s_\theta(x,y)^2$ where $w_\text{zero}$ is a regularization weight. This regularization loss encourages $s_\theta(x,y)$ to be close to 0, for all examples in the training set. In our experiments, we set $w_\text{zero} = 1$.

Our end-to-end (E2E) PICNN uncertainty models use the same architecture as above, initialized with weights from the the trained \ETO\ model. Unlike for box and ellipsoidal uncertainty which used a weighted combination of task loss and NLL loss, our E2E PICNN uncertainty models are trained only with the task loss. Similar to the \ETO\ PICNN model, we also regularize the E2E PICNN. Here, we add a regularization loss of $w_\text{q} \cdot q^2$, where $w_\text{q}$ is a regularization weight and $q$ is the conformal prediction threshold computed in each minibatch of E2E training. This regularization loss term aims to keep $q$ near 0; without this regularization, we found that $q$ tended to grow dramatically over training epochs with poor task loss. In our experiments, we set $w_\text{q} = 0.01$.

The E2E models used the best L2 weight decay from the \ETO\ models. For the battery storage problem, we tested learning rates of $10^{-3}$ and $10^{-4}$. For the portfolio optimization problem, we used a learning rate of $5 \times 10^{-3}$.

\subsection{Data}
\label{sec:appendix_data}

\paragraph{Price forecasting for battery storage.}

We use the same dataset as \citet{donti_task-based_2017} in our price forecasting for battery storage problem. In this dataset, the target $y \in \R^{24}$ is the hourly PJM day-ahead system energy price for 2011-2016, for a total of 2189 days. Unlike \citet{donti_task-based_2017}, though, we do not exclude any days whose electricity prices are too high (>500\$/MWh). Whereas \citet{donti_task-based_2017} treated these days as outliers, our conditional robust optimization problem is designed to output robust decisions. For predicting target for a given day, the inputs $x \in \R^{101}$ include the previous day's log-prices, the given day's hourly load forecast, the previous day's hourly temperature, the given day's hourly temperature, and several calendar-based features such as whether the given day is a weekend or a US holiday.

For the setting without distribution shift, we take a random 20\% subset of the dataset as the test set; because the test set is selected randomly, it is considered exchangeable with the rest of the dataset. For the setting with distribution shift, we take the chronologically last 20\% of the dataset as the test set; because load, electricity prices, and temperature all have distribution shifts over time, the test set is not exchangeable with the rest of the dataset. \Cref{fig:storage_distshift_data} illustrates the data from the distribution shift setting, where the electricity price tends to be lower and has lower variability in the test set compared to the training/validation splits. For each seed, we further use a 80/20 random split of the remaining data for training and calibration.

\begin{figure}[t]
\centering
\begin{subfigure}[t]{\textwidth}
    \centering
    \includegraphics[width=0.9\textwidth]{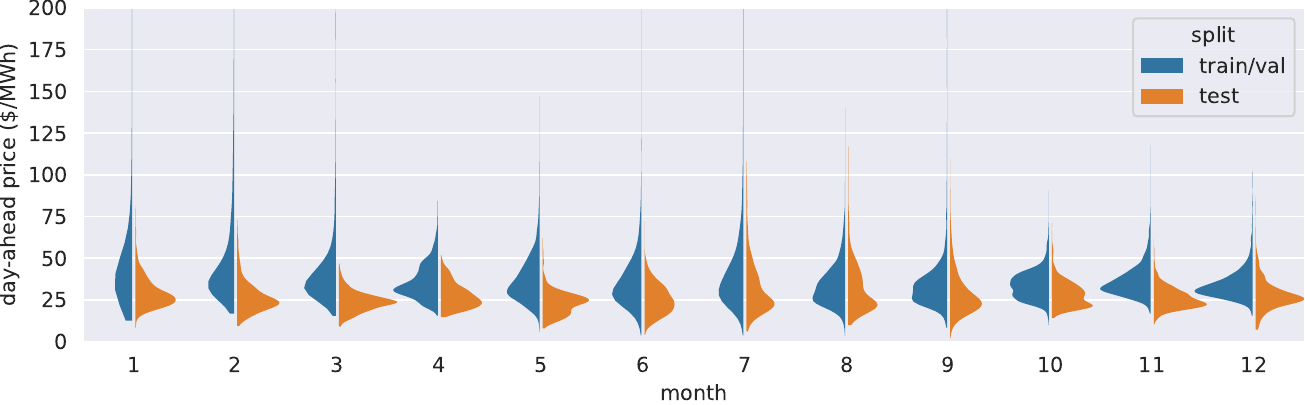} 
    \caption{Month of year vs. electricity price}
\end{subfigure}\vspace{1em}
\begin{subfigure}[t]{\textwidth}
    \centering
    \includegraphics[width=\textwidth]{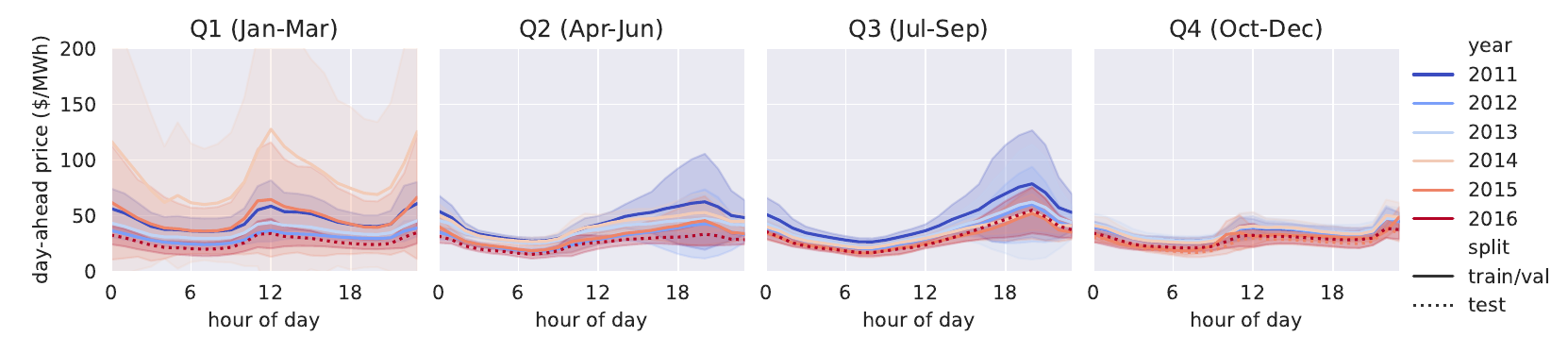} 
    \caption{Hour of day vs. electricity price (shaded region shows $\pm$ 1 std dev)}
\end{subfigure}\vspace{1em}
\begin{subfigure}[t]{\textwidth}
    \centering
    \includegraphics[width=\textwidth]{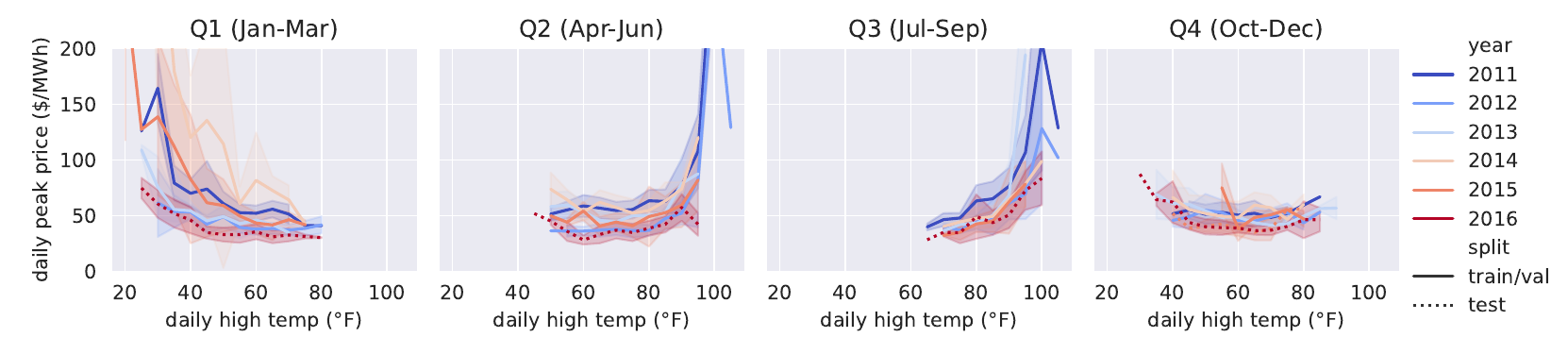} 
    \caption{Daily high temperature (rounded to nearest 5${}^\circ$F) vs. daily peak electricity price (shaded region shows $\pm$ 1 std dev)}
\end{subfigure}%
\caption{Visualization of data from the battery storage problem under distribution shift. The train/val splits are sampled randomly from the range 2011-01-04 to 2015-10-20, whereas the test set comprises 2015-10-21 to 2016-12-31. These plots evidently show that the electricity price is generally lower in the test set and also has lower variability by both hour of day and the daily maximum temperature.}
\label{fig:storage_distshift_data}
\end{figure}

\paragraph{Portfolio optimization.}
For the portfolio optimization task, we used synthetically generated data. We sample $x \in \R^2, y \in \R^2$ from a mixture of three 4-D multivariate Gaussian distributions as used in \cite{chenreddy_end--end_2024}. Formally,
\[
\begin{bmatrix}x\\y\end{bmatrix} \sim p_a\Gaussian(\mu_a, \Sigma_a) + p_b\Gaussian(\mu_b, \Sigma_b) + p_c\Gaussian(\mu_c, \Sigma_c)
\]
where $p_a + p_b + p_c = 1$. Specifically,
\begin{align*}
    p_a &= \phi, &
    p_b &= \frac{1}{\alpha_\text{GMM}+1} (1-\phi), &
    p_c &= \frac{\alpha_\text{GMM}}{\alpha_\text{GMM}+1} (1-\phi), \\
    \mu_a &= \zero_4, &
    \mu_b &= \begin{bmatrix}0 & 5 & 5 & 0\end{bmatrix}^\top, &
    \mu_c &= \mu_b, \\
    \Sigma_a &= \begin{bmatrix}
        1    & 0   & 0.37 & 0 \\
        0    & 1.5 & 0    & 0 \\
        0.37 & 0   & 2    & 0.73 \\
        0    & 0   & 0.73 & 3
    \end{bmatrix}, &
    \Sigma_b &= \alpha_\text{GMM} \Sigma_a, &
    \Sigma_c &= \frac{1}{\alpha_\text{GMM}} \Sigma_a,
\end{align*}
for some $\phi \in [0,1]$ and $\alpha_\text{GMM} \in [0,1]$. In our experiments, we used $\phi = 0.7$ and $\alpha_\text{GMM} = 0.9$. (\citet{chenreddy_end--end_2024} do not disclose the values of $\phi$ and $\alpha_\text{GMM}$ chosen for their experiments.) For each random seed, we generate 2000 samples and use a (train, calibration, test) split of (600, 400, 1000).

\end{document}